%% file: main.tex
\documentclass[english]{article} %
\usepackage{iclr2027_conference,times}

\usepackage{xcolor}
\usepackage{colortbl}
\usepackage{url}
\usepackage{graphicx}
\usepackage{wrapfig}
\usepackage{float}
\usepackage{amsmath,amsfonts,bm}
\usepackage{amsthm}
\usepackage{booktabs}
\usepackage{adjustbox}
\usepackage{multirow}
\usepackage{listings}
\usepackage{tcolorbox}
\usepackage{aliascnt}
\usepackage{hyperref}
\usepackage{cleveref}
\usepackage{tabularx}
\usepackage{enumitem}

\newcommand{\METHOD}{VRSD}

\newcommand{\B}[1]{\textbf{#1}}

\definecolor{ptopsdblue}{HTML}{EDF5FC}

\definecolor{promptorange}{HTML}{A65324}
\definecolor{promptgreenone}{HTML}{527F24}
\definecolor{promptgreentwo}{HTML}{146D43}
\definecolor{promptgreenthree}{HTML}{053E2E}
\tcbset{teacherpromptbox/.style={
    colback=white,
    boxrule=0.5pt, arc=4pt, outer arc=4pt,
    boxsep=0pt, left=8pt, right=8pt, top=5pt, bottom=5pt,
    before skip=7pt, after skip=7pt,
    fonttitle=\normalfont\bfseries\footnotesize,
    attach title to upper={\par\vspace{3pt}}
}}
\lstdefinestyle{teacherprompttext}{
    basicstyle=\normalfont\footnotesize,
    breaklines=true, breakatwhitespace=true, breakindent=0pt,
    columns=fullflexible, keepspaces=true, frame=none,
    xleftmargin=0pt, xrightmargin=0pt, aboveskip=0pt, belowskip=0pt,
    escapeinside={(*@}{@*)}, literate={—}{{\textrm{---}}}1
}

\newtheorem{theorem}{Theorem}[section]
\newaliascnt{lemma}{theorem}

\aliascntresetthe{lemma}
\newaliascnt{proposition}{theorem}

\aliascntresetthe{proposition}
\newaliascnt{corollary}{theorem}
\newtheorem{corollary}[corollary]{Corollary}
\aliascntresetthe{corollary}

\crefname{section}{Section}{Sections}
\crefname{subsection}{Section}{Sections}
\crefname{subsubsection}{Section}{Sections}
\crefname{paragraph}{Section}{Sections}
\crefname{subparagraph}{Section}{Sections}
\crefname{appendix}{Appendix}{Appendices}
\crefname{subappendix}{Appendix}{Appendices}
\crefname{subsubappendix}{Appendix}{Appendices}
\crefname{subsubsubappendix}{Appendix}{Appendices}
\crefname{subsubsubsubappendix}{Appendix}{Appendices}
\crefname{table}{Table}{Tables}
\crefname{figure}{Figure}{Figures}
\crefname{equation}{Eq.}{Eqs.}
\crefname{subequation}{Eq.}{Eqs.}
\creflabelformat{equation}{(#2#1#3)}
\creflabelformat{subequation}{(#2#1#3)}
\crefname{theorem}{Theorem}{Theorems}
\crefname{lemma}{Lemma}{Lemmas}
\crefname{proposition}{Proposition}{Propositions}
\crefname{corollary}{Corollary}{Corollaries}

\input{math_commands.tex}

\title{Self-Distillation with Verbal Regularization}

\author{%
Jiazheng Li\textsuperscript{\ensuremath{\dagger},1}, Yawei Li\textsuperscript{\ensuremath{\dagger},2,3}, Junru Lu\textsuperscript{4}, Zhenyi Shen\textsuperscript{1}, Di Yin\textsuperscript{4},\\
Lin Gui\textsuperscript{1}, Xing Sun\textsuperscript{4}, David R\"ugamer\textsuperscript{2,3}, Xinpeng Wang\textsuperscript{*,2}, Yulan He\textsuperscript{*,1}}

\definecolor{preprintblue}{HTML}{245A81}
\makeatletter
\newcommand{\preprinttitle}{%
  {\raggedright
  {\LARGE\bfseries\color{preprintblue}%
  \mbox{\@title}\par}
  \vspace{12pt}
  {\normalsize\bfseries\@author\par}
  \vspace{4pt}
  {\normalsize
  \begin{tabular}[t]{@{}l@{\quad}l@{\quad}l@{}}
  \textsuperscript{1}King's College London & \textsuperscript{2}LMU Munich &
  \textsuperscript{3}Munich Center for Machine Learning\\
  \textsuperscript{4}Tencent YouTu Lab & \multicolumn{2}{@{}l@{}}{}
  \end{tabular}\par}
  \vspace{4pt}
  {\small
  \texttt{\{jiazheng.li,zhenyi.shen,yulan.he\}@kcl.ac.uk}\\
  \texttt{\{yawei.li,david.ruegamer\}@lmu.de},\quad \texttt{xinpeng@cis.lmu.de}\\
  \texttt{\{junrulu,endymecyyin,winfredsun\}@tencent.com}\\
  \par}}
  \vspace{10pt}
  {\color{preprintblue!40}\hrule height 0.4pt}
  \vspace{12pt}}
\makeatother
\renewenvironment{abstract}{\normalsize\leftskip=16pt\rightskip=16pt\parfillskip=0pt plus 1fil\noindent\ignorespaces}{\par}

\iclrfinalcopy
\hypersetup{
    hidelinks,
    pdftitle={Self-Distillation with Verbal Regularization},
    pdfauthor={Jiazheng Li, Yawei Li, Junru Lu, Zhenyi Shen, Di Yin, Lin Gui, Xing Sun, David R\"ugamer, Xinpeng Wang, Yulan He}
}
\begin{document}
\raggedbottom %

\lhead{Preprint}
\thispagestyle{plain}
\begin{tcolorbox}[colback=ptopsdblue,colframe=ptopsdblue,
  boxrule=0pt,arc=10pt,boxsep=0pt,left=20pt,right=20pt,top=30pt,bottom=12pt,
  before skip=0pt,after skip=18pt]
\preprinttitle
\input{sections/0_abstract.tex}
\vspace{9pt}
{\raggedright\fontsize{9}{11}\selectfont
\textbf{Code:} \url{https://github.com/lijiazheng99/VRSD}\par}
\vspace{7pt}
\newcommand{\logoimg}[2]{\raisebox{-0.5\height}{\includegraphics[height=#1]{#2}}}
\noindent
\logoimg{16pt}{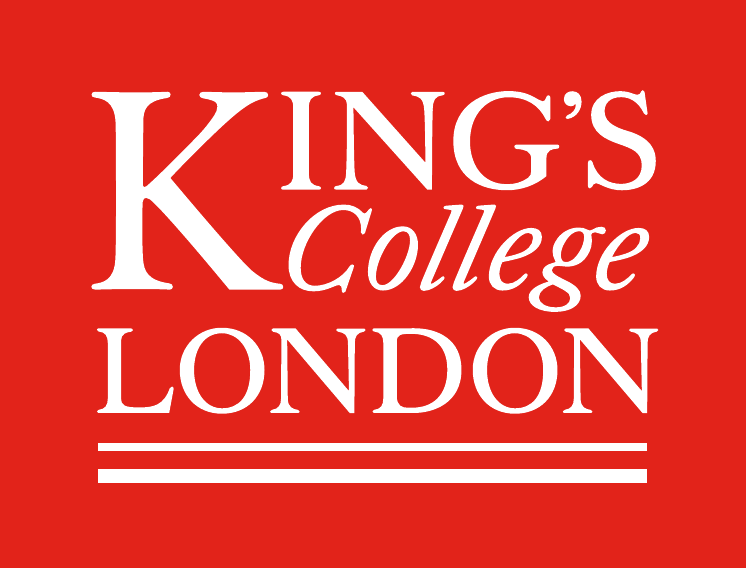}\hspace{7pt}%
\logoimg{16pt}{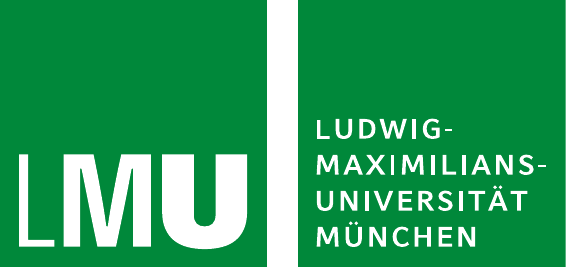}\hspace{7pt}%
\logoimg{15pt}{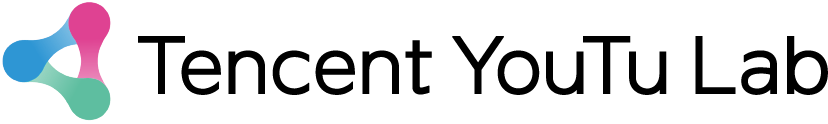}\hspace{7pt}%

\end{tcolorbox}
\begingroup
\renewcommand{\thefootnote}{\fnsymbol{footnote}}
\footnotetext[2]{Equal contribution.}
\footnotetext[1]{Corresponding authors: Xinpeng Wang and Yulan He.}
\endgroup
\input{sections/1_introduction.tex}
\input{sections/2_related_work.tex}
\input{sections/3_background.tex}
\input{sections/4_methodology.tex}
\input{sections/5_experiments.tex}

\input{sections/6_conclusion.tex}

\section*{Acknowledgment}
The work was supported in part by the UK Engineering and Physical Sciences Research Council through a Turing AI Fellowship (grant no. EP/V020579/1, EP/V020579/2) and the Prosperity Partnership scheme (grant no. UKRI566). 

\section*{AI use statement}
During the preparation of this manuscript, the authors used AI tools for code debugging and for improving the readability of text written by the authors. All conceptual development, theoretical analysis, the prompts used in the experiments, and interpretation of results were carried out independently by the authors. The authors have carefully reviewed and verified the content in the manuscript and take full responsibility for the integrity and accuracy of this submission.

\section*{Ethics statement}
Our contributions are algorithmic and foundational, focusing on the mechanics of on-policy self-distillation and teacher-side regularization in language models. All empirical evaluations were conducted on publicly released, standardized academic benchmarks (ASAP-SAS, SciKnowEval, Spider, and LiveCodeBench). The study does not involve human subjects, human annotations, or personally identifiable and sensitive data. All code generation and execution evaluations were conducted within isolated, secure sandbox environments. We foresee no immediate ethical concerns or direct societal harms arising from this work.

\section*{Reproducibility statement}

To ensure the full reproducibility of our theoretical and empirical findings, we provide comprehensive documentation throughout the paper and supplementary materials:
\begin{itemize}
    \item The mathematical derivation of the affine attenuation model for teacher mutual information is formalized in Section~\ref{sec:prompt-attenuation} and expanded in Appendix~\ref{app:proofs}.
    \item Exact training hyperparameters (including learning rates, batch sizes, rollout counts, context limits, and early-stopping criteria) across Qwen3-4B, Qwen3-8B, and OLMo-3-7B-Instruct are documented in Section~\ref{sec:setup} and summarized in Table~\ref{tab:hparams} (Appendix~\ref{app:hparams}).
    \item The complete verbatim system prompts for both the single-prompt and multi-prompt configurations across all four tasks are provided in Appendix~\ref{app:prompts}.
    \item The decoding setup and deterministic regex rules for student/teacher response-pattern extraction are detailed in Section~\ref{sec:qualitative} and Appendix~\ref{app:pattern-setup} (Table~\ref{tab:pattern-rules}).
    \item Training scripts, evaluation harnesses, and pattern-checking utilities are available in the code repository: \url{https://github.com/lijiazheng99/VRSD}.
\end{itemize}

\bibliography{iclr2027_conference}
\bibliographystyle{iclr2027_conference}
\newpage

\appendix
\crefalias{section}{appendix}
\crefalias{subsection}{subappendix}
\crefalias{subsubsection}{subsubappendix}
\crefalias{paragraph}{subsubsubappendix}
\crefalias{subparagraph}{subsubsubsubappendix}
\input{sections/7_appendix.tex}

\end{document}

%% file: math_commands.tex
\newcommand{\thetav}{\bm{\theta}}
\newcommand{\stopgrad}{\operatorname{sg}}
\newcommand{\PMI}{\operatorname{PMI}}
\newcommand{\dd}{\mathrm{d}}
\newcommand{\Cov}{\mathrm{Cov}}

%% file: sections/0_abstract.tex
\begin{abstract}
On-policy self-distillation (OPSD) trains a large language model using supervision from a teacher instantiated from the same model as the student. The teacher is conditioned on privileged information (PI), such as verified solutions. The student, which does not observe PI, learns from the teacher’s next-token distributions along its own sampled responses. However, PI can bias supervision toward shortcut reasoning with limited verification or reconsideration, accompanied by a sharp drop in the entropy of the output distribution. To address this challenge, we introduce self-distillation with verbal regularization (VRSD), which uses system prompts to regulate how the teacher uses PI, thereby encouraging rethinking and exploration to counteract premature commitment. Theoretically, we model prompting as attenuating the influence of PI on the teacher distribution and derive conditions for entropy preservation. Across three models and four tasks, VRSD improves Pass@1 over OPSD in 11 of 12 model–task settings, by 3.9 percentage points on average, while generally maintaining higher token entropy.
\end{abstract}

%% file: sections/1_introduction.tex
\section{Introduction} \label{sec:intro}

Knowledge distillation transfers  capabilities from a teacher to a student language model.
In conventional off-policy distillation, the student is trained on fixed teacher or reference responses~\citep{kim-rush-2016-sequence,hsieh-etal-2023-distilling,ho-etal-2023-large}. Because these responses are generated independently of the student's current policy, their prefixes may differ from those the student encountered during its own generation~\citep{opd1}.
On-policy distillation (OPD)~\citep{opd1,opd2} reduces this mismatch by providing dense teacher supervision on the student's own sampled responses.
Its dependence on a separate teacher nevertheless adds training cost.
On-policy self-distillation (OPSD)~\citep{opsd} removes this dependency by using the same model as both student and teacher under different conditioning.
Given an input question, the student first generates a response without access to privileged information (PI).
The self-teacher then uses additional PI, such as a verified solution, to produce a next-token distribution at each position of the student response.
The student is trained to match these teacher distributions.

\begin{figure}[!t]
    \centering
    \includegraphics[width=\linewidth]{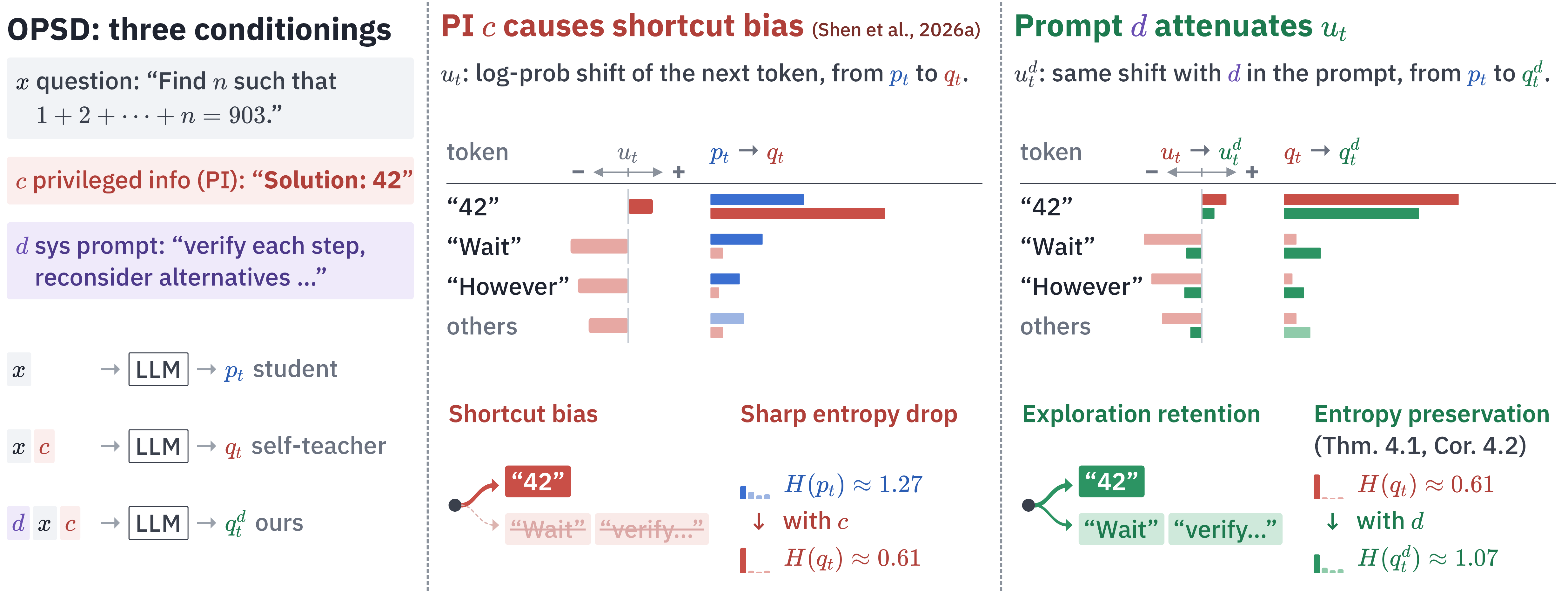}
    \vspace{-16pt}
    \caption{\METHOD{} counteracts OPSD's shortcut bias with a teacher-only prompt.
    \textbf{Left:} student $p_t$ (blue), self-teacher $q_t$ (red), and prompted teacher $q_t^d$ (green)
    under different conditioning.
    \textbf{Middle:} The log-probability shifts $u_t$ induced by PI $c$ can favor immediate commitment to the supplied answer (``42'')
    and suppress reconsideration (``Wait'', ``However''), concentrating $q_t$ and lowering its entropy $H(q_t)$.  Numerical values are illustrative. 
    \textbf{Right:} The system prompt $d$ encourages verification and reconsideration.
    By attenuating $u_t$, the prompt can restore probability to reconsideration in the teacher $q_t^d$ while retaining support for the answer.
    Under our affine model, \Cref{thm:teacher-entropy,cor:local-entropy-increase} show that
    attenuation preserves the teacher's entropy under mild conditions.}
    \label{fig:overview}
\end{figure}

However, previous work~\citep{kim2026doesselfdistillationsometimesdegrade,kaur2026rethinkingonpolicyselfdistillationthinking,shen2026anti} suggests that access to the PI can reinforce shortcut continuations while suppressing reconsideration.
This influence is described through log-probability shifts, which measure changes in the model's next-token log-probabilities when PI is added.
These shifts indicate which tokens become more or less likely at the same position in the student's response.
If these shifts reinforce tokens that the teacher already prefers, for example, tokens that immediately commit to the supplied answer (i.e., the ``shortcut'' tokens), they can make the teacher distribution more concentrated and reduce its entropy.
Distilling such supervision may encourage the student to inherit both shortcut reasoning behavior and the associated concentration of its predictive distribution, limiting opportunities to explore alternative reasoning paths.
The challenge is therefore to benefit from PI guidance while limiting its tendency to drive premature commitment.

To tackle this challenge, we introduce \textbf{self-distillation with verbal regularization (\METHOD{})}.
Rather than changing the model architecture, training objective, or PI itself, \METHOD{} modifies only the \emph{self-teacher's conditioning}.
Specifically, we introduce teacher-only system prompts %
that instruct the model to verify intermediate steps, consider alternatives, and reconsider the earlier reasoning before committing to an answer.
As illustrated in \Cref{fig:overview}, these prompts use the model's existing instruction-following capability to regulate how strongly PI shapes the teacher's token-level supervision.
The resulting method can therefore be incorporated directly into existing OPSD pipelines.
In our theoretical analysis, we model this regulation as \emph{affine attenuation} of the log-probability shifts induced by PI.
We characterize when prompting can counteract teacher-side entropy collapse by deriving an exact condition for entropy increase. Our contributions are as follows:
\begin{itemize}[leftmargin=1.2em]
	\item We introduce teacher-only verbal regularization for OPSD, using system prompts to regulate how the teacher uses privileged solutions without modifying the architecture or distillation objective.
 	\item We introduce an affine attenuation model of PI-induced log-probability shifts to explain when prompting counteracts teacher concentration reinforced by PI.
    \item \METHOD{} improves Pass@1 over OPSD in 11 of 12 model-task settings, while improving average Pass@1 and Worst@16 for all three models (\cref{tab:main}). Further analyses indicate that \METHOD{} mitigates over-concentration in output distributions (\cref{sec:entropy}). Causal interventions in \cref{sec:shortcut} indicate that, on average, VRSD students rely more on their reasoning chains to reach final answers compared to vanilla OPSD, indicating higher faithfulness in reasoning paths.
\end{itemize}

%% file: sections/2_related_work.tex
\section{Related work}

\textbf{OPD}\quad
Knowledge distillation \citep{hinton2015distillingknowledgeneuralnetwork,Gou_2021,xu2024surveyknowledgedistillationlarge} traditionally trains a student to match a teacher’s predictive distribution. 
For language models, distillation often uses token-level cross-entropy on hard teacher generated sequences \citep{kim-rush-2016-sequence,hsieh-etal-2023-distilling,ho-etal-2023-large}.
OPD~\citep{opd1,opd2,opd3,opd4} trains the student on its own responses by matching the teacher's next-token distributions at each step.
This reduces the distribution mismatch of off-policy distillation~\citep{opd1} and provides more dense supervision than GRPO-like methods~\citep{grpo,yu2025dapo,zheng2025group}.
Building on these advantages, multi-teacher OPD \citep{mopd1,mopd2,mopd3} has recently emerged as a practical paradigm for consolidating the complementary capabilities of multiple teachers into the student.

\textbf{OPSD}\quad
Within the OPSD framework~\citep{opsd,sdpo,sang2026crispcompressedreasoningiterative}, prior work has explored diverse sources of PI for teacher supervision.
This privileged context may include reference solutions~\citep{opsd}, successful~\citep{sdpo} or self-consistent responses~\citep{uopsd}, and feedback from environments~\citep{sdpo,zhang2026learningraresuccessrich,liu2026herohindsightenhancedreflectionenvironment} or humans~\citep{wang2026openclawrl}.
Related approaches extend this context with reflections~\citep{ropsd} or reusable skills~\citep{jiang2026distillskillsweightsprompts,wang2026skillsdskillconditionedselfdistillationmultiturn,yang2026opidonpolicyskilldistillation}.
More broadly, OPCD~\citep{ye2026onpolicycontextdistillationlanguage} distills behaviors specified by system prompts.
Its system-prompt setting supplies general instructions without instance-specific reference solutions.
However, in OPSD, conditioning the self-teacher on complete reference solutions can introduce shortcut bias~\citep{kim2026doesselfdistillationsometimesdegrade,kaur2026rethinkingonpolicyselfdistillationthinking,harne2026privilegedbiasedpiconditionedteachers}.
To mitigate this bias, Purified OPSD~\citep{purified} filters reference-driven shortcuts from training targets by comparing teacher predictions with and without the question.
AntiSD~\citep{shen2026anti} instead combines GRPO with updates that discourage matching the privileged teacher.
A complementary line of work modifies the PI supplied to the teacher: PS-OPSD~\citep{psopsd} replaces solutions with structured guidance, while ATESD~\citep{atesd} adaptively controls how much reference reasoning is revealed.
In this work, \METHOD{} instead regularizes how the teacher uses PI: it retains complete reference solutions and uses teacher-only system prompts to regulate their influence, aiming to mitigate shortcut bias and preserve the student’s exploration ability. 

%% file: sections/3_background.tex
\section{Background of on-policy self-distillation}\label{sec:background}

Let $\mathcal V$ denote the vocabulary and
$\mathcal V^*=\bigcup_{\ell > 0}\mathcal V^\ell$ the set of all finite token sequences.
Given an input $x\in\mathcal V^*$, an LLM $\pi_{\thetav}$ parametrized by $\thetav \in \Theta$ assigns a probability to each response $y=(y_1,\ldots,y_T)\in\mathcal V^T$ through the factorization
$\pi_{\thetav}(y\mid x)=\prod_{t=1}^{T}\pi_{\thetav}(y_t\mid x,y_{<t})$.

\textbf{OPSD}\quad
To formalize OPSD, let $\mathcal D$ be a training distribution over pairs
$(x,c)\in\mathcal V^*\times\mathcal V^*$.
Here, $c$ denotes privileged information (PI) that is available during training but not at
inference, such as a verified answer or a chain-of-thought extracted from a correct
trajectory.
Given $(x,c)\sim\mathcal D$, the model first acts as the student and samples a response
$y\sim\pi_{\thetav}(\cdot\mid x)$ without access to $c$.
At each resulting prefix $y_{<t}$, the student next-token distribution is
$p_t(\cdot):=\pi_{\thetav}(\cdot\mid x,y_{<t})$.
Evaluating the same model at the same prefix with PI gives the \emph{self-teacher} distribution
$q_t(\cdot):=\pi_{\thetav}(\cdot\mid x,c,y_{<t})$.
Thus, $p_t$ and $q_t$ share the parameters $\thetav$ and the prefix $y_{<t}$ but differ in
their access to $c$.
This conditioning asymmetry allows the same model to serve as its own teacher.

\textbf{OPSD with generalized Jensen-Shannon divergence}\quad
At each prefix $y_{<t}$, OPSD transfers the PI-informed teacher distribution to the
student using normalized generalized Jensen-Shannon divergence (JSD).
For a mixture weight $\alpha\in(0,1)$, let
$m_t=\alpha q_t+(1-\alpha)p_t$ denote the convex mixture of the self-teacher and student
distributions.
Their divergence is
\begin{equation}
    \mathcal{J}_\alpha(q_t\|p_t)
    =
    \frac{
        \alpha D_{\mathrm{KL}}(q_t\|m_t)
        +(1-\alpha)D_{\mathrm{KL}}(p_t\|m_t)
    }{
        \alpha(1-\alpha)
    },
    \label{eq:normalized-jsd}
\end{equation}
where $D_{\mathrm{KL}}(\cdot\|\cdot)$ denotes the Kullback-Leibler (KL) divergence.
Using $\mathcal{J}_\alpha$, OPSD minimizes
\begin{equation}
    \mathcal L_{\mathrm{OPSD}}(\thetav)
    =
    \mathbb E_{(x,c)\sim\mathcal D}
    \mathbb E_{y\sim\pi_{\thetav}(\cdot\mid x)}
    \left[
        \frac{1}{T}\sum_{t=1}^{T}
        \mathcal{J}_\alpha\!\left(\stopgrad[q_t]\|p_t\right)
    \right],
    \label{eq:bare-opsd-loss}
\end{equation}
where $\operatorname{sg}[\cdot]$ denotes the stop-gradient operator.
It keeps the self-teacher distribution fixed during differentiation, so each term updates
the student distribution toward the PI-informed teacher.

%% file: sections/4_methodology.tex
\section{Methodology}
\subsection{Regularizing premature teacher commitment}
OPSD uses PI to provide supervision signals.
We first examine how PI can bias the self-teacher toward premature commitment, and
how teacher-only system prompting can regulate this effect.

\textbf{PI can reinforce premature commitment}\quad
At a student-generated prefix $y_{<t}$, conditioning on PI changes the self-teacher's relative preferences
over possible next tokens.
Following the conditional pointwise mutual information (PMI) interpretation of AntiSD~\citep{shen2026anti}, we quantify the PI-induced log-probability shift for token
$v\in\mathcal V$ as
\begin{equation}
    u_t(v):=\log\frac{q_t(v)}{p_t(v)}
    =\PMI(v;c\mid x,y_{<t}).
\end{equation}
When $c$ contains PI such as verified solution, a positive $u_t(v)$ means that access to it makes
$v$ more likely.
Such support can guide reasoning, but it can also favor direct commitment to the supplied
answer before the current reasoning has justified it.
If PI primarily reinforces already-preferred conclusion tokens while suppressing
tokens associated with verification or reconsideration, the resulting teacher distribution becomes more concentrated (i.e., low entropy) and assigns little
probability mass to alternatives to premature commitment.

Such concentration can propagate to the student because it is trained to match
the teacher's distribution.
This preference transfer is reflected in the JSD gradient:
$-\nabla_{\thetav}\mathcal J_\alpha(\stopgrad[q_t]\|p_t)
=\mathbb E_{v\sim p_t}[A_t(u_t(v))\nabla_{\thetav}\log p_t(v)]$,
where $A_t(u)=\alpha^{-1}\log[(1-\alpha)+\alpha e^u]$.
Since $A_t(u)$ is strictly increasing in the PMI score $u$, tokens receiving larger positive PI-induced shifts (i.e., larger
PMI scores) also have larger coefficients in the distillation gradient.
Thus, when PI strongly favors premature commitment, the student is explicitly trained to increase the %
probability of those %
continuations. If the student approaches a highly concentrated teacher distribution, competing continuations become less likely under the student's own policy.
As subsequent training uses student-generated rollouts, these alternatives become less
likely to be sampled, limiting opportunities to explore different reasoning paths.
This link between teacher concentration and on-policy exploration motivates preserving probabilities on continuations that explore alternatives or reconsider intermediate steps.

\textbf{Regularizing teacher with system prompts}\quad
We use the self-teacher's instruction-following capability to regulate how it uses PI. Specifically, we introduce a
training-only system prompt $d$ that instructs the teacher to verify intermediate steps, or consider alternative reasoning paths before committing to an answer (see \cref{app:teachercont} for a behavioral analysis of the prompts' regularizing effects on teacher reasoning).
Formally, at the same prefix and with the same PI, the prompted teacher distribution is
$q_t^d(v)=\pi_{\thetav}(v\mid d,x,c,y_{<t})$ for $v\in\mathcal V$.
The student, by contrast, continues to generate without access to either the system prompt $d$ or the PI $c$.
Training replaces the original teacher distribution $q_t$ with the prompted distribution $q_t^d$ in \cref{eq:bare-opsd-loss}, giving the per-prefix loss
$\mathcal J_\alpha(\stopgrad[q_t^d]\|p_t)$, while leaving the JSD objective unchanged.
The prompt introduces no additional learned module or regularization loss; it is absent
from the student's inference-time input.
To quantify the effect of this prompt regularization, we next model how $d$ changes the teacher's
token preferences.

\textbf{An affine model of PMI attenuation}\quad
Using the same student distribution $p_t$ as reference, define the prompted PMI
$u_t^d(v):=\log[q_t^d(v)/p_t(v)]$.
Empirically, we find that the teacher-only system prompt reduced the PMI (i.e., the magnitude of PI-induced preference shifts) on average.
Such an attenuation effect can be modeled as
\begin{equation}
    u_t^d(v)=\gamma_tu_t(v)+\kappa_t,
    \qquad \gamma_t<1,\quad v\in\mathcal V.
    \label{eq:linear_attenuation}
\end{equation}
The affine coefficient $\gamma_t$ controls the relative scaling of PMI differences
across tokens, while $\kappa_t$ ensures normalization of $q_t^d$.
We estimate the factor $\gamma_t$
across models and tasks in \cref{sec:prompt-attenuation}.
\Cref{fig:gamma_estimate} shows mean estimates and 95\% bootstrap confidence intervals,
indicating statistically significant PMI attenuation. This supports the hypothesis that teacher prompting weakens PI-induced token preference shifts.
We next derive how this PMI attenuation changes the teacher distribution and, in particular, its entropy.

\begin{figure}[t]
    \centering
    \includegraphics[width=0.95\textwidth]{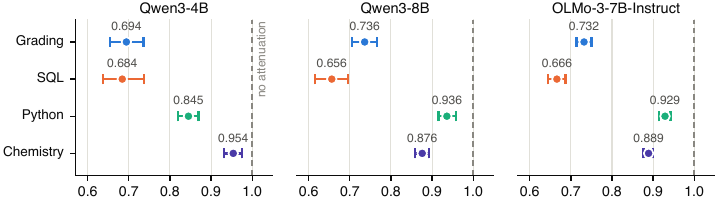}
    \vspace{-5mm}
    \caption{Estimated PMI attenuation induced by the teacher prompt $d$.
    Points show $\widehat\gamma$, obtained by averaging per-prefix estimates within each problem
    and then across problems; error bars show 95\% bootstrap confidence intervals (CIs).
    Dashed lines mark no attenuation.
    The CIs lie below this baseline, indicating that the system prompts attenuate PI-induced log-probability shifts on average.
    }
    \label{fig:gamma_estimate}
\end{figure}

\subsection{When PMI attenuation increases teacher entropy}
Experiments in \cref{sec:entropy} show that OPSD can produce \emph{low-entropy} models after training, potentially restricting exploration of alternative reasoning paths.
\METHOD{} mitigates this concentration by preserving the entropy.
Here, we provide a theoretical explanation for the observed entropy preservation by analyzing the teacher entropy change
$H(q_t^d)-H(q_t)$ under the affine PMI attenuation model, where $H(\cdot)$ denotes Shannon entropy. We first compute the prompted teacher distribution $q_t^d$ under the affine attenuation model in \cref{eq:linear_attenuation}, obtaining
\begin{equation}
    q_t^d(v)
    =\frac{p_t(v)\exp[\gamma_tu_t(v)]}
          {\sum_zp_t(z)\exp[\gamma_tu_t(z)]}
    =\frac{q_t(v)\exp[-(1-\gamma_t)u_t(v)]}
          {\sum_zq_t(z)\exp[-(1-\gamma_t)u_t(z)]}.
    \label{eq:attenuated-teacher-distribution}
\end{equation}
The derivation is provided in \cref{app:proof-teacher-distribution}. To analyze how this reweighting changes teacher entropy, introduce an intermediate
attenuation strength $\lambda\in[\gamma_t,1]$ and define
\begin{equation}
    q_{t,\lambda}(v)
    =
    \frac{q_t(v)\exp[(\lambda-1)u_t(v)]}
         {\sum_{z\in\mathcal V}q_t(z)\exp[(\lambda-1)u_t(z)]}.
    \label{eq:attenuation-path}
\end{equation}
This analytical path connects the bare and prompted teachers:
$q_{t,1}=q_t$ and, by \cref{eq:attenuated-teacher-distribution},
$q_{t,\gamma_t}=q_t^d$.
Next, \cref{thm:teacher-entropy} (with proof in \cref{app:proof-teacher-entropy}) gives an exact expression
for the entropy change induced by attenuation along this path.

\begin{theorem}
\label{thm:teacher-entropy}
Under the affine attenuation model in \cref{eq:linear_attenuation} with $\gamma_t<1$,
\begin{equation}
    H(q_t^d)-H(q_t)
    =\int_{\gamma_t}^{1}
    \Cov_{q_{t,\lambda}}\left(u_t,\log q_{t,\lambda}\right)\dd\lambda.
    \label{eq:teacher-entropy-condition}
\end{equation}
Here, $\Cov_{q_{t,\lambda}}(u_t,\log q_{t,\lambda})$ denotes the covariance between
the PMI score $u_t(v)$ and the log-probability $\log q_{t,\lambda}(v)$,
computed over $v\sim q_{t,\lambda}$.
\end{theorem}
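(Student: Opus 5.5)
We prove \cref{thm:teacher-entropy} by the fundamental theorem of calculus along the exponential-family path in \cref{eq:attenuation-path}. Write $s=\lambda-1$ and $Z(\lambda)=\sum_z q_t(z)\exp[(\lambda-1)u_t(z)]$, so that $\log q_{t,\lambda}(v)=\log q_t(v)+(\lambda-1)u_t(v)-\log Z(\lambda)$. This is a one-parameter exponential tilt of $q_t$ with sufficient statistic $u_t$. Since $\mathcal V$ is finite, $Z$ is a finite sum of smooth positive functions, and $\lambda\mapsto H(q_{t,\lambda})$ is $C^\infty$ on $[\gamma_t,1]$. The endpoints are already identified in the text: $q_{t,1}=q_t$, and $q_{t,\gamma_t}=q_t^d$ by \cref{eq:attenuated-teacher-distribution}, derived from the affine model \cref{eq:linear_attenuation}. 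Hence
\[
H(q_t^d)-H(q_t)=-\int_{\gamma_t}^{1}\frac{\dd}{\dd\lambda}H(q_{t,\lambda})\,\dd\lambda ,
\]
and it suffices to show $\frac{\dd}{\dd\lambda}H(q_{t,\lambda})=-\Cov_{q_{t,\lambda}}(u_t,\log q_{t,\lambda})$.

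\textbf{Step 1: the score.} Differentiate the log-density. Using $\frac{\dd}{\dd\lambda}\log Z(\lambda)=\mathbb E_{q_{t,\lambda}}[u_t]$, we get
\[
\partial_\lambda \log q_{t,\lambda}(v)=u_t(v)-\mathbb E_{q_{t,\lambda}}[u_t],
\qquad
\partial_\lambda q_{t,\lambda}(v)=q_{t,\lambda}(v)\bigl(u_t(v)-\mathbb E_{q_{t,\lambda}}[u_t]\bigr).
\]

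\textbf{Step 2: the entropy derivative.} From $H=-\sum_v q_{t,\lambda}\log q_{t,\lambda}$,
\[
\frac{\dd H}{\dd\lambda}=-\sum_v \partial_\lambda q_{t,\lambda}(v)\log q_{t,\lambda}(v)-\sum_v \partial_\lambda q_{t,\lambda}(v).
\]
The second sum vanishes because $\sum_v q_{t,\lambda}(v)=1$ for all $\lambda$. Substituting Step 1 into the first sum gives
\[
\frac{\dd H}{\dd\lambda}=-\mathbb E_{q_{t,\lambda}}\bigl[(u_t-\mathbb E_{q_{t,\lambda}} u_t)\log q_{t,\lambda}\bigr]=-\Cov_{q_{t,\lambda}}(u_t,\log q_{t,\lambda}).
\]
Integrating from $\gamma_t$ to $1$ and flipping the sign yields \cref{eq:teacher-entropy-condition}.

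\textbf{Main obstacle: support and well-definedness.} The calculus is routine; the delicate point is the support. If $p_t(v)=0$ or $q_t(v)=0$ for some token, $u_t(v)$ is infinite or undefined. I would first restrict to the common support of $p_t$ and $q_t$, which is what \cref{eq:linear_attenuation} implicitly requires since it asserts a finite affine relation for all $v$. On this support $u_t$ is finite, every $q_{t,\lambda}$ has the same support, $\log q_{t,\lambda}$ is finite, and differentiating the finite sums term by term is justified. Tokens outside the support have zero mass under every $q_{t,\lambda}$ and contribute nothing to either side.

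I would also remark that the sign of $\gamma_t$ is irrelevant. The condition $\gamma_t<1$ only ensures the integration interval is nondegenerate and correctly oriented.
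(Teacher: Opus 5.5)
Your proposal is correct and follows the paper's proof essentially step for step. Both arguments differentiate $\log Z_t(\lambda)$ to obtain the centered score $u_t-\mathbb E_{q_{t,\lambda}}[u_t]$, use $\sum_v \partial_\lambda q_{t,\lambda}(v)=0$ to reduce $\frac{\dd}{\dd\lambda}H(q_{t,\lambda})$ to $-\Cov_{q_{t,\lambda}}(u_t,\log q_{t,\lambda})$, and integrate over $[\gamma_t,1]$ using $q_{t,1}=q_t$ and $q_{t,\gamma_t}=q_t^d$. Your remarks on the common support (which the paper handles implicitly by assuming positive token probabilities) and on the sign of $\gamma_t$ being irrelevant are accurate refinements, not a different route.
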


\Cref{thm:teacher-entropy} evaluates covariance under the intermediate distributions $q_{t,\lambda}$
along the attenuation path.
For $\gamma_t$ close to $1$, a first-order expansion of the entropy change involves only
the covariance at $\lambda=1$, where $q_{t,1}=q_t$.
The following corollary therefore gives a local sufficient condition for entropy increase
in terms of the bare teacher distribution $q_t$.

\begin{corollary}
\label{cor:local-entropy-increase}
Under the affine attenuation model in \cref{eq:linear_attenuation}, if
$\Cov_{q_t}(u_t,\log q_t)>0$, then there exists $\delta_t>0$ such that
$H(q_t^d)>H(q_t)$ for every $\gamma_t\in(1-\delta_t,1)$.
\end{corollary}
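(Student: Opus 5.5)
The plan is to apply \Cref{thm:teacher-entropy} with $\gamma_t$ as the lower endpoint, and then use continuity of the integrand in $\lambda$ near $\lambda=1$. Write
\[
F(\lambda):=\Cov_{q_{t,\lambda}}\left(u_t,\log q_{t,\lambda}\right),
\]
where $q_{t,\lambda}$ is the attenuation path in \cref{eq:attenuation-path}. For any $\gamma_t<1$, the theorem gives
\[
H(q_t^d)-H(q_t)=\int_{\gamma_t}^1 F(\lambda)\,\dd\lambda .
\]
The claim then follows if $F$ is continuous at $\lambda=1$, since $F(1)=\Cov_{q_t}(u_t,\log q_t)>0$ by hypothesis.

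\textbf{Continuity of $F$.} I will work on the support of $q_t$, where the vocabulary $\mathcal V$ is finite. On that support:
\begin{itemize}[leftmargin=1.2em]
\item each $q_{t,\lambda}(v)$ is a ratio of finite sums of exponentials $\exp[(\lambda-1)u_t(v)]$ with a strictly positive denominator, so it is smooth in $\lambda$ and stays positive;
\item $\log q_{t,\lambda}(v)=\log q_t(v)+(\lambda-1)u_t(v)-\log Z(\lambda)$, with $Z(\lambda)=\sum_z q_t(z)\exp[(\lambda-1)u_t(z)]$, is smooth;
\item the covariance is a finite sum of products of these quantities, so $F$ is continuous, in fact $C^\infty$, on a neighborhood of $[\gamma_t,1]$.
\end{itemize}
Continuity of $u_t$ itself is not needed, because $u_t$ does not depend on $\lambda$.

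\textbf{Support caveat.} Tokens with $p_t(v)=0$ or $q_t(v)=0$ need some care. The paper implicitly assumes full-support softmax distributions, so $u_t$ is finite everywhere. I will state this assumption explicitly rather than treat zero-probability tokens separately. This bookkeeping is the only real obstacle.

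\textbf{Conclusion.} By continuity, there is $\delta_t\in(0,1]$ such that $F(\lambda)>F(1)/2>0$ for all $\lambda\in(1-\delta_t,1]$. For any $\gamma_t\in(1-\delta_t,1)$,
\[
H(q_t^d)-H(q_t)=\int_{\gamma_t}^1F(\lambda)\,\dd\lambda\ \ge\ (1-\gamma_t)\,\frac{F(1)}{2}\ >\ 0 .
\]
As a check consistent with the first-order remark in the text, $\frac{\dd}{\dd\gamma}\bigl[H(q_{t,\gamma})-H(q_t)\bigr]\big|_{\gamma=1}=-F(1)$. The entropy change is therefore $(1-\gamma_t)F(1)+o(1-\gamma_t)$, which is positive for $\gamma_t$ close to $1$.
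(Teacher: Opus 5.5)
Your proof is correct and is essentially the paper's argument. Both rest on the derivative of $H(q_{t,\lambda})$ along the attenuation path being $-\Cov_{q_{t,\lambda}}(u_t,\log q_{t,\lambda})$, which equals $-\Cov_{q_t}(u_t,\log q_t)<0$ at $\lambda=1$, and both arrive at the same bound $H(q_t^d)-H(q_t)\ge\tfrac12(1-\gamma_t)\Cov_{q_t}(u_t,\log q_t)$. The one difference is that you integrate the theorem's formula using only continuity of the covariance near $\lambda=1$, whereas the paper applies Taylor's theorem to $H(q_{t,\lambda})$ at $\lambda=1$ with a second-derivative bound to choose $\delta_t$ explicitly; your version therefore needs slightly less regularity but yields a less explicit $\delta_t$.
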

\Cref{cor:local-entropy-increase} considers $\gamma_t$ sufficiently close to $1$. In this regime, teacher entropy increases when PMI scores and teacher log-probabilities are positively correlated under $q_t$, as proven in \cref{app:proof-local-entropy-increase}.
Motivated by AntiSD's token-level analysis, in \cref{app:covariance-test} we examine prefixes where PI substantially changes the sampled continuation, identified by $|u_t(y_t)|>1$. At these prefixes, we compute the covariance over the full vocabulary under $q_t$. \cref{tab:cov-strat} in \cref{app:covariance-test} reports mean covariance across various model-task settings, ranging from $0.43$ to $0.58$, with the covariance at $79\%-91\%$ of these prefixes being positive. These measurements support the covariance condition at prefixes with substantial PMI scores. 

%% file: sections/5_experiments.tex
\section{Experiments}\label{sec:experiments}

The experiments are presented in the following order. We first assess the task performance of VRSD in \cref{sec:main}. Next, in \cref{sec:entropy}, we examine student entropy to determine whether the entropy-preserving effect of teacher-side regularization carries over to the student. In \cref{sec:promptset}, we then investigate which instruction choices contribute to the performance gains. Since our ultimate goal is to encourage the student to learn better reasoning behaviors, we next examine whether it exhibits the intended reasoning patterns. We present this qualitative study in \cref{sec:qualitative}. Finally, we extend this behavioral analysis with a quantitative assessment of reasoning faithfulness. In \cref{sec:shortcut}, we test whether the student's answers causally depend on its step-by-step reasoning.

\subsection{Experimental setup}\label{sec:setup}

\textbf{Datasets}\quad We train and evaluate on four tasks with automatic verifiers: short-answer \emph{grading} on ASAP-SAS~\citep{hamner2012asapsas}, \emph{chemistry} on SciKnowEval~\citep{feng2024sciknoweval}, \emph{SQL} on the Spider development subset~\citep{yu-etal-2018-spider}, and \emph{Python} code generation on LiveCodeBench~\citep{jain2025livecodebench}. The PI $c$ is a verified solution to the same problem, obtained from a correct model rollout.

\textbf{Teacher system prompts}\quad We use two designs, \textcolor{promptorange}{\METHOD{} (single)} and \textcolor{promptgreentwo}{\METHOD{} (multi)}. In the single-prompt setting, all rollouts share a single composite task-level prompt $d$ that encourages reflection and systematic reasoning. In the multi-prompt setting, the rollouts of each input $x$ are split as evenly as possible across $K$ aspect-specific prompts $d_1,\ldots,d_K$, each targeting one aspect of the reasoning process; a rollout assigned to $d_k$ uses $q_t^{d_k}$ as its teacher distribution. The chemistry prompts ($K{=}3$) are shown below; all prompts and the value of $K$ for each task are given in \cref{app:prompts}.

\begin{tcolorbox}[teacherpromptbox,
    colframe=promptorange!50!white, coltitle=promptorange, title={\METHOD{} (single)}]
\begin{lstlisting}[style=teacherprompttext, basicstyle=\normalfont\footnotesize\color{promptorange}]
(*@$\boldsymbol{d}$@*)  You are a careful, reflective problem solver. Before committing to an answer, pause and reflect on your reasoning: re-examine each step for mistakes, question assumptions, check your arithmetic, and consider whether your earlier attempt went wrong and why. Only then give your final answer.
\end{lstlisting}
\end{tcolorbox}

\begin{tcolorbox}[teacherpromptbox,
    colframe=promptgreentwo!45!white, coltitle=promptgreentwo, title={\METHOD{} (multi)}]
\begin{lstlisting}[style=teacherprompttext, basicstyle=\normalfont\footnotesize\color{promptgreenone}]
(*@$\boldsymbol{d}_1$@*)  You are a chemistry expert who never trusts a first pick. Go through all four options and give the concrete reason each is right or wrong: e.g. the specific rule, mechanism, periodic trend, or calculation that decides it, not a gut feeling. After a provisional choice, deliberately reconsider: which of the other options is the classic trap for this concept, and what exact fact would make IT correct instead? If that case is stronger, backtrack and switch. Commit only once every option has been ruled in or out by an explicit chemical reason.
\end{lstlisting}
    \vspace{5pt}
\begin{lstlisting}[style=teacherprompttext, basicstyle=\normalfont\footnotesize\color{promptgreentwo}]
(*@$\boldsymbol{d}_2$@*)  You are a chemistry expert who reasons from mechanism, not memory. Before choosing, name the governing principle this question tests (reaction type, periodic trend, equilibrium, conservation, bonding, or structure-property rule) and derive what it predicts. Then match that derivation against each option. If your derivation and your first instinct disagree, do not default to instinct. Pause, find which step is wrong, and re-derive. Commit only when the principle and the chosen option genuinely agree.
\end{lstlisting}
    \vspace{5pt}
\begin{lstlisting}[style=teacherprompttext, basicstyle=\normalfont\footnotesize\color{promptgreenthree}]
(*@$\boldsymbol{d}_3$@*)  You are a chemistry expert who challenges your own answer. Make a provisional choice, then argue against it: what is the strongest case that a different option is correct, and what specific fact would you have had to overlook for your pick to be wrong? Separately, name the distractor most easily confused with your answer and say exactly what distinguishes them. Weigh both countercases, and switch whenever the counter rests on firmer chemistry. Backtrack explicitly the moment a step in your reasoning fails to hold.
\end{lstlisting}
\end{tcolorbox}

\textbf{Models and training}\quad We fine-tune Qwen3-4B~\citep{opd3}, Qwen3-8B, and OLMo-3-7B-Instruct~\citep{olmo2025olmo}. All runs use $8$ rollouts per prompt, $32$ prompts per batch, a prompt length of 2048 tokens, no KL regularization, and early stopping after four consecutive validations without improvement. Grading, chemistry, and SQL share one configuration up to response length, while Python follows the OPSD code recipe because it requires a longer output length~\citep{sdpo}. The task-specific training settings listed in \cref{tab:hparams} (\cref{app:hparams}) are held fixed across methods and models within each task.

\textbf{Evaluation and baselines}\quad We compare against the original pretrained model before task-specific fine-tuning (labeled \emph{Untrained}), \emph{OPSD}~\citep{opsd}, and \emph{AntiSD}~\citep{shen2026anti}. For AntiSD, we use the method-specific hyperparameters reported in the original paper. For each training run, we select the checkpoint with the highest validation Pass@1. We evaluate all models by sampling 16 responses per problem at temperature 0.6 and top-$p$ 0.95. We report Pass@1 (mean accuracy across the 16 sampled responses), Maj@16 (accuracy of the majority-vote answer), and Worst@16 (the fraction of problems for which all 16 responses are correct). At the selected checkpoint, we also report $H(p)$, the mean next-token entropy over the student's rollouts.

\subsection{Task performance}\label{sec:main}

\begin{table}[t]\centering\small\setlength{\tabcolsep}{3.4pt}
\caption{Task performance (\%) across three models and four tasks. P@1, M@16, and W@16 denote Pass@1, Maj@16, and Worst@16, respectively. ``Untrained'' denotes the model before task-specific training. ``Average'' denotes the mean performance across all four tasks. Blue rows highlight \METHOD{}; \textbf{bold} and \underline{underlined} values denote the best and second-best results, respectively.}
\label{tab:main}
\begin{adjustbox}{max width=\textwidth}\begin{tabular}{l ccc ccc ccc ccc ccc}
\toprule
& \multicolumn{3}{c}{\textbf{Grading (ASAP-SAS)}} & \multicolumn{3}{c}{\textbf{Chemistry (SciKnowEval)}} & \multicolumn{3}{c}{\textbf{SQL (Spider)}} & \multicolumn{3}{c}{\textbf{Python (LiveCodeBench)}} & \multicolumn{3}{c}{\textbf{Average}}\\
\cmidrule(lr){2-4}\cmidrule(lr){5-7}\cmidrule(lr){8-10}\cmidrule(lr){11-13}\cmidrule(lr){14-16}
Method & P@1$\uparrow$ & M@16$\uparrow$ & W@16$\uparrow$ & P@1$\uparrow$ & M@16$\uparrow$ & W@16$\uparrow$ & P@1$\uparrow$ & M@16$\uparrow$ & W@16$\uparrow$ & P@1$\uparrow$ & M@16$\uparrow$ & W@16$\uparrow$ & P@1$\uparrow$ & M@16$\uparrow$ & W@16$\uparrow$\\
\midrule
\multicolumn{16}{l}{\textit{Fine-tuned model: Qwen3-4B}}\\[1pt]
Untrained & 57.2 & 58.7 & 37.9 & 70.7 & 71.7 & 58.3 & 71.3 & 72.4 & 59.4 & 47.2 & 50.5 & 26.6 & 61.6 & 63.3 & 45.6\\
OPSD & 71.9 & 71.8 & 61.3 & 81.7 & 81.9 & 79.2 & 69.8 & 69.7 & 64.9 & 61.4 & 61.9 & 51.8 & 71.2 & 71.3 & 64.3\\
AntiSD & 78.3 & \underline{79.1} & 63.4 & 80.7 & 80.7 & 80.2 & 79.5 & \underline{80.6} & 71.2 & 53.3 & 55.7 & 35.5 & 73.0 & 74.0 & 62.6\\
\rowcolor{ptopsdblue}
\METHOD{} (single) & \underline{78.7} & 78.7 & \underline{73.8} & \B{86.1} & \B{86.3} & \B{81.8} & \underline{80.0} & \underline{80.6} & \underline{71.7} & \B{64.1} & \B{65.2} & \underline{54.5} & \underline{77.2} & \underline{77.7} & \underline{70.5}\\
\rowcolor{ptopsdblue}
\METHOD{} (multi) & \B{83.0} & \B{83.5} & \B{76.6} & \underline{85.8} & \underline{86.2} & \underline{81.3} & \B{81.0} & \B{81.7} & \B{74.4} & \underline{63.8} & \underline{64.7} & \B{54.8} & \B{78.4} & \B{79.0} & \B{71.8}\\
\midrule
\multicolumn{16}{l}{\textit{Fine-tuned model: OLMo-3-7B-Instruct}}\\[1pt]
Untrained & 52.0 & 54.6 & 19.8 & 51.2 & 65.2 & 22.9 & 64.7 & 66.5 & 39.4 & 40.5 & 43.8 & 14.8 & 52.1 & 57.5 & 24.2\\
OPSD & 76.3 & 77.2 & 62.4 & 81.9 & 82.5 & 74.3 & 74.1 & 75.4 & \underline{64.6} & \underline{57.3} & \underline{59.2} & 37.4 & 72.4 & 73.6 & 59.7\\
AntiSD & 67.0 & 70.1 & 39.6 & 77.5 & 79.0 & 66.2 & \underline{74.3} & \underline{75.6} & 60.4 & 45.7 & 46.3 & 23.5 & 66.1 & 67.8 & 47.4\\
\rowcolor{ptopsdblue}
\METHOD{} (single) & \underline{83.9} & \B{84.9} & \B{74.9} & \underline{83.5} & \underline{84.2} & \B{79.2} & 72.5 & 73.3 & 64.3 & 56.6 & 57.4 & \B{45.6} & \underline{74.1} & \underline{75.0} & \B{66.0}\\
\rowcolor{ptopsdblue}
\METHOD{} (multi) & \B{84.3} & \underline{84.8} & \underline{74.7} & \B{83.8} & \B{84.5} & \underline{78.1} & \B{75.1} & \B{75.8} & \B{64.7} & \B{57.6} & \B{60.7} & \underline{40.4} & \B{75.2} & \B{76.5} & \underline{64.5}\\
\midrule
\multicolumn{16}{l}{\textit{Fine-tuned model: Qwen3-8B}}\\[1pt]
Untrained & 60.6 & 61.0 & 41.4 & 72.1 & 73.3 & 58.3 & 72.2 & 73.3 & 60.2 & 51.7 & 55.7 & 31.2 & 64.2 & 65.8 & 47.8\\
OPSD & 75.8 & 76.6 & 63.3 & \underline{83.8} & \underline{84.1} & \B{78.9} & 78.5 & 78.2 & \underline{73.0} & \B{65.6} & \B{66.7} & \B{56.3} & 75.9 & 76.4 & \underline{67.9}\\
AntiSD & 69.7 & 69.9 & 60.9 & 72.1 & 73.3 & 57.8 & 78.6 & 79.0 & \B{73.4} & 59.2 & 62.7 & 39.9 & 69.9 & 71.2 & 58.0\\
\rowcolor{ptopsdblue}
\METHOD{} (single) & \underline{77.5} & \underline{78.7} & \underline{64.9} & 83.1 & 83.2 & 77.4 & \B{80.2} & \B{80.8} & 72.1 & 63.7 & \underline{64.5} & 53.2 & \underline{76.1} & \underline{76.8} & 66.9\\
\rowcolor{ptopsdblue}
\METHOD{} (multi) & \B{81.2} & \B{81.9} & \B{69.4} & \B{84.2} & \B{84.6} & \underline{78.7} & \underline{79.0} & \underline{79.7} & 69.9 & \underline{65.4} & \B{66.7} & \underline{55.8} & \B{77.5} & \B{78.2} & \B{68.5}\\
\bottomrule
\end{tabular}\end{adjustbox}
\end{table}

\textbf{A single VRSD teacher prompt improves OPSD, while multiple prompts yield further gains.}
As shown in \cref{tab:main}, both variants improve average Pass@1 over OPSD on every evaluated model. 
\METHOD{} (multi) improves average Pass@1 over OPSD by 7.2, 2.8, and 1.6 percentage points on Qwen3-4B, OLMo-3-7B-Instruct, and Qwen3-8B, respectively, yielding an average gain of approximately 3.9 percentage points. Furthermore, it achieves higher Pass@1 in 11 of 12 model--task settings. 
The benefits also extend beyond mean accuracy. \METHOD{} (multi) matches or exceeds OPSD in Maj@16 across all settings and improves average Worst@16 on every model. These results show that VRSD improves both average accuracy and the frequency with which problems are answered correctly across all sampled responses. Compared to the single-prompt VRSD variant, the multi-prompt variant achieves higher average Pass@1 and Maj@16 on every model.

\textbf{\METHOD{} remains effective even when OPSD underperforms the starting model.}
The benefits also extend to a setting where OPSD itself reduces accuracy.On Qwen3-4B SQL, OPSD's Pass@1 falls below that of the starting model (69.8\% versus 71.3\%), whereas both VRSD variants exceed it, with \METHOD{} (multi) reaching 81.0\%. In this setting, changing the self-teacher's conditioning turns a decline in student accuracy into an improvement over the starting model, demonstrating that VRSD can enable effective self-distillation even when OPSD does not yield an accuracy gain.

\subsection{Student entropy}\label{sec:entropy}
\begin{figure}[t]
    \centering
    \includegraphics[width=0.8\textwidth]{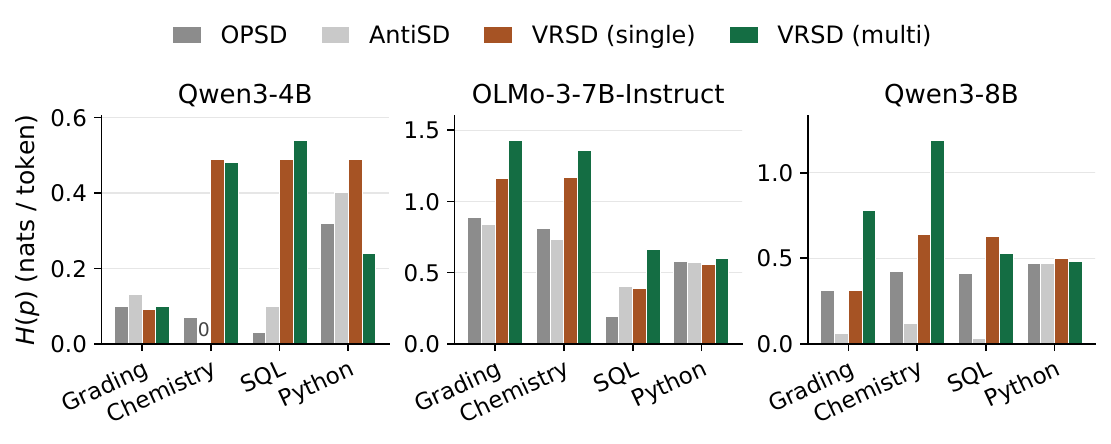}
    \vspace{-6mm}
    \caption{Average token entropy $H(p)$ of the distilled students. \METHOD{} generally yields higher entropy than OPSD; the detailed values and their ratios to OPSD are listed in \cref{tab:entropy} in \cref{app:entropy}.}
    \label{fig:entropy}
\end{figure}

\textbf{Teacher-side regularization mitigates over-concentration in the distilled student.}
Our theoretical analysis identifies conditions under which prompting counteracts concentration in the teacher distribution. The student is trained to match this distribution, and it is the model ultimately used to perform the tasks. We therefore measure the student's next-token entropy directly to assess whether the effect of teacher-side regularization carries over through distillation. \Cref{fig:entropy} shows that \METHOD{} produces less concentrated student distributions, with higher next-token entropy than OPSD in 19 of 24 comparisons across the two prompt variants. The clearest example is Qwen3-4B SQL, where OPSD produces a highly concentrated student distribution with $H(p)=0.03$, compared with 0.49 for \METHOD{} (single) and 0.54 for \METHOD{} (multi). For the multi-prompt variant, this reduced concentration accompanies a Pass@1 improvement from 69.8\% to 81.0\%. These results suggest that regularizing the teacher's supervision can help preserve the student's capacity to sample alternative continuations, leaving more scope for exploring different reasoning paths.

\subsection{Ablating the \METHOD{} (multi) design}\label{sec:promptset}

\begin{figure}[t]\centering
\includegraphics[width=\textwidth]{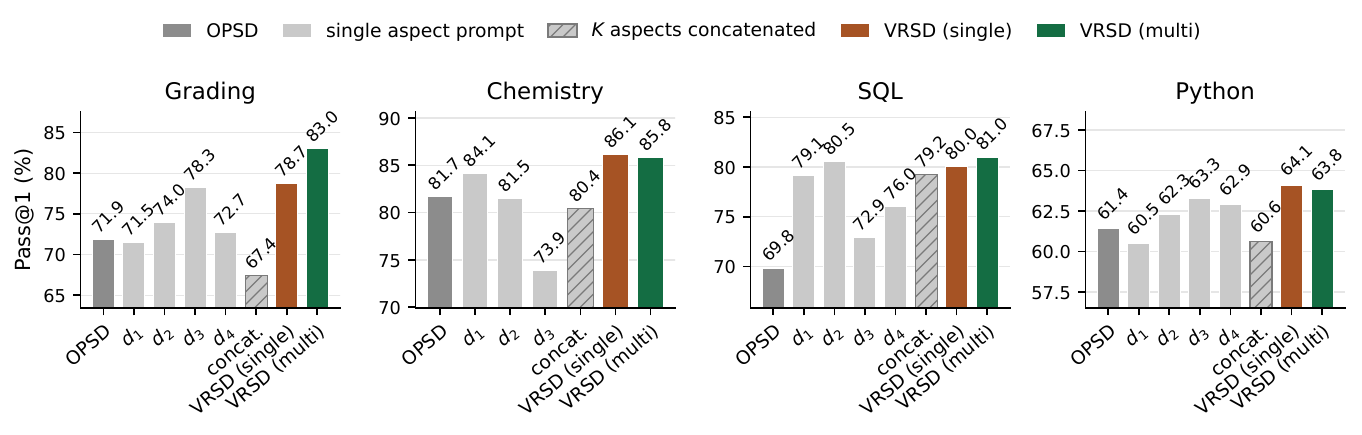}
\vspace{-8mm}
\caption{Pass@1 of Qwen3-4B under different teacher prompt designs. Each $d_k$ bar shows training with only the $k$-th teacher prompt from the \METHOD{} (multi) prompt set, used for all rollouts. Each of these prompts encourages a different approach to the problem, such as focusing on selected aspects. ``Concat.'' concatenates all $K$ prompts into one. \METHOD{} (single) uses a composite prompt $d$, whereas \METHOD{} (multi) assigns different prompts $d_k$ to different rollouts. The multi-prompt variant achieves higher Pass@1 than both the individual prompts and their concatenation across all four tasks.}
\label{fig:promptset}
\end{figure}

Does placing all instructions in one teacher prompt reproduce the benefits of distributing them across rollouts? We compare these choices on Qwen3-4B. Concatenation appends the original $d_k$ prompts unchanged, while \METHOD{} (single) uses a separately written composite prompt $d$ that presents an overall reasoning routine. Both settings use the same teacher prompt for every rollout. \METHOD{} (multi) assigns different $d_k$ to different rollouts. We also evaluate each $d_k$ alone. \Cref{fig:promptset} shows Pass@1, with full results in \cref{tab:promptset}.

\textbf{\METHOD{} (multi) improves over individual prompts and their concatenation.}
\METHOD{} (multi) exceeds concatenation and every individual prompt across all four tasks, improving over the strongest $d_k$ by 0.5 to 4.7 percentage points. Concatenation falls below OPSD on three tasks, while the composite prompt used by \METHOD{} (single) improves over both concatenation and OPSD on all four, showing that how instructions are combined within a prompt also matters. Compared with the composite prompt, the multi-prompt design further improves grading and SQL, with similar results on chemistry and Python. These results suggest that distributing different forms of guidance across rollouts can yield more effective supervision than applying an individual prompt alone or concatenating all prompts into one, leading to better student performance.

\subsection{Learning the reasoning patterns encouraged by teacher prompts}\label{sec:qualitative}

\begin{table}[t]\centering\small\setlength{\tabcolsep}{5pt}
\caption{Changes in student reasoning patterns relative to OPSD. We measure the percentage of student test responses exhibiting each pattern. Entries report the mean increase across three models, in percentage points. The count $(n/3)$ indicates how many models show an increase. Compared with OPSD, VRSD students more frequently exhibit the patterns encouraged by teacher prompts.}
\label{tab:patterns}
\begin{tabular}{llcc}
\toprule
\multirow{2}{*}{\textbf{Task}}
& \multirow{2}{*}{\shortstack{\textbf{Encouraged}\\\textbf{reasoning pattern}}}
& \multicolumn{2}{c}{\textbf{\METHOD{}} ($\Delta$ vs.\ OPSD)} \\
\cmidrule(lr){3-4}
& & \textbf{Single} & \textbf{Multi} \\
\midrule
\multirow{2}{*}{Grading} 
& Checks rubrics & +56.2 (3/3) & +41.5 (3/3) \\
& Quotes evidence & +68.2 (3/3) & +53.2 (3/3) \\
\midrule
\multirow{3}{*}{Chemistry} 
& Discusses all options & +23.5 (3/3) & +43.7 (3/3) \\
& Rules out options & +13.8 (3/3) & +22.6 (3/3) \\
& Checks conclusion & +73.6 (3/3) & +5.7 (2/3) \\
\midrule
\multirow{3}{*}{SQL} 
& Names schema tables & +36.4 (3/3) & +30.5 (3/3) \\
& Names join keys & +38.0 (3/3) & +29.4 (3/3) \\
& Plans query clauses & +53.4 (3/3) & +45.1 (3/3) \\
\midrule
\multirow{2}{*}{Python} 
& Discusses boundary cases & +19.0 (2/3) & +7.6 (2/3) \\
& Checks conclusion & +15.2 (3/3) & +5.9 (3/3) \\
\bottomrule
\end{tabular}
\end{table}

\textbf{Students adopt the reasoning patterns encouraged by teacher prompts.}
We measure how often student test responses exhibit the encouraged patterns, using the rules in \cref{app:behavior-heuristics}. Students generate these responses without teacher prompts or PI. \Cref{tab:patterns} shows positive average changes over OPSD for every reported pattern under both designs, with increases in most comparisons.

On chemistry, training with the reflective teacher prompt produces a larger increase in student conclusion checking than the multi-prompt design (+73.6 versus +5.7 percentage points over OPSD). The multi-prompt student shows larger increases in discussing and ruling out options, matching the prompts' emphasis on evaluating alternatives. Both designs also promote rubric checks and evidence quotations on grading, and explicit references to schema elements and query clauses on SQL. These correspondences suggest that teacher-side instructions can guide the reasoning patterns for the students.

\subsection{Evaluating reasoning faithfulness through causal interventions}\label{sec:shortcut}

\begin{table}[t]\centering\small\setlength{\tabcolsep}{4pt}
\caption{Evaluating reasoning faithfulness through causal interventions on Qwen3-4B. We retain different fractions of each student's reasoning chain and regenerate the final answer. \textbf{RG}$\uparrow$ (reasoning gain) is the Pass@1 difference between retaining the full reasoning and removing it, in percentage points. MR$\downarrow$ (misleading-reasoning rate) is the percentage of all evaluated answers that are correct without reasoning but incorrect when the full reasoning is retained. Compared with OPSD, VRSD achieves higher mean RG and lower mean MR, providing evidence of more faithful reasoning.}
\label{tab:shortcut}
\begin{adjustbox}{max width=\textwidth}\begin{tabular}{l cc cc cc cc cc}
\toprule
& \multicolumn{2}{c}{\textbf{Grading}} & \multicolumn{2}{c}{\textbf{Chemistry}} & \multicolumn{2}{c}{\textbf{SQL}} & \multicolumn{2}{c}{\textbf{Python}} & \multicolumn{2}{c}{\textbf{Average}} \\
\cmidrule(lr){2-3}\cmidrule(lr){4-5}\cmidrule(lr){6-7}\cmidrule(lr){8-9}\cmidrule(lr){10-11}
Student
& RG$\uparrow$ & MR$\downarrow$
& RG$\uparrow$ & MR$\downarrow$
& RG$\uparrow$ & MR$\downarrow$
& RG$\uparrow$ & MR$\downarrow$
& RG$\uparrow$ & MR$\downarrow$ \\
\midrule
Untrained & -0.8 & 13.4 & -0.3 & 5.9 & +2.3  & 4.9 & +2.5 & 6.7 & +0.9 & 7.7\\
OPSD & +5.2 & 2.5 & +4.2 & 1.9 & +2.3 & 4.9 & +3.2 & 2.5 & +3.7 & 3.0\\
\rowcolor{ptopsdblue}
\METHOD{} (single) & +11.9 & 1.8 & +1.1 & 1.6 & +5.5 & 1.9 & +4.7 & 1.1 & +5.8 & 1.6\\
\rowcolor{ptopsdblue}
\METHOD{} (multi) & +14.4 & 3.1 & +1.0 & 1.0 & +2.8 & 2.0 & +1.8 & 2.3 & +5.0 & 2.1\\
\bottomrule
\end{tabular}\end{adjustbox}
\end{table}

After examining which reasoning patterns students learn, we test whether their final answers depend on their reasoning. We evaluate Qwen3-4B using the truncation intervention of \citet{lanham2023measuring}. We sample student responses at the evaluation temperature, retain progressively larger fractions of each reasoning chain from none to the full chain, append the answer delimiter, and regenerate the final answer. As defined in \cref{tab:shortcut}, we assess students' reliance on explicit reasoning using reasoning gain (RG). 
We also report the misleading-reasoning rate (MR) to measure how often full reasoning overturns an otherwise correct answer.

\textbf{\METHOD{} students show stronger evidence of faithful reasoning.}
Retaining the full reasoning improves answer accuracy for both variants on all four tasks (\cref{tab:shortcut}). Averaged across tasks, RG reaches 5.8 percentage points for \METHOD{} (single) and 5.0 for \METHOD{} (multi), compared with 3.7 for OPSD. The effect is particularly clear on grading, where both variants more than double the RG of OPSD. The larger average RG is accompanied by lower MR than OPSD in seven of eight comparisons across the two variants and four tasks. These interventions suggest that \METHOD{} students rely more on their reasoning to reach correct answers than OPSD students do. Retaining their reasoning improves accuracy more on average and less often turns a correct answer into an incorrect one. Together, these findings provide evidence of improved reasoning faithfulness.

%% file: sections/6_conclusion.tex
\section{Conclusion}\label{sec:conclusion}

We introduced \METHOD{} to improve OPSD by regulating how the self-teacher uses privileged solutions. 
The central idea is to retain solution guidance while encouraging verification and reconsideration, shaping the supervision through teacher conditioning. 
Our analysis connects this intervention to attenuation of solution-induced log-probability shifts and identifies conditions under which it counteracts teacher concentration. 
Experiments complement this analysis with improved accuracy and answer consistency, generally higher student entropy, and evidence of improved reasoning faithfulness. 
These findings highlight teacher conditioning as a practical means of regulating privileged supervision. The model's existing instruction-following capability can help preserve alternatives to premature commitment and improve the learning signal it provides to itself.

Alongside these benefits, the current approach has several limitations that warrant further investigation. The current approach relies on manually curated, task-specific prompts. This requires human judgment about which reasoning behaviors to encourage, and the resulting prompts may not suit every problem or remain equally useful as the student's capabilities evolve during training. Future work could automate prompt selection and adaptation across problems and training stages. Beyond prompt design, verbal instructions control attenuation indirectly through the model's response to their wording. This motivates methods that estimate and adjust PI-induced log-probability shifts directly. Such methods could use our entropy analysis to guide where and how strongly to attenuate the influence of privileged solutions, enabling more adaptive control of teacher regularization.

%% file: sections/7_appendix.tex
\section{Estimating the PMI attenuation coefficient}
\label{sec:prompt-attenuation}

\subsection{Estimation algorithm}

Use the student trajectory $y\sim\pi_{\thetav}(\cdot\mid x)$ sampled by OPSD.
At each fixed prefix $y_{<t}$, evaluate the same model to obtain the background
distributions $p_t$ and $q_t$, together with the prompted PI-informed teacher
\begin{equation}
    q_t^d(v):=\pi_{\thetav}(v\mid d,x,c,y_{<t}),
    \qquad v\in\mathcal V,
    \label{eq:prompted-distributions}
\end{equation}
where $d$ denotes the reasoning-oriented teacher prompt.
Both teacher scores use the same unprompted, PI-free reference:
\begin{equation}
    u_t(v):=\log\frac{q_t(v)}{p_t(v)},
    \qquad
    u_t^d(v):=\log\frac{q_t^d(v)}{p_t(v)}.
    \label{eq:pi-scores}
\end{equation}
The score $u_t$ describes the bare PI-induced preference, while $u_t^d$ includes the
combined effect of PI and the prompt relative to that fixed reference.
This tests the net prompted-teacher shift along the original PMI direction, rather than
isolating the effect of PI conditional on $d$.
No prompt-only forward pass is required.

The affine approximation in \cref{eq:linear_attenuation} defines the analytical model
\begin{equation}
    \frac{q_t^d(v)}{p_t(v)}
    \propto
    \left(\frac{q_t(v)}{p_t(v)}\right)^{\gamma_t},
    \qquad \gamma_t<1,
    \label{eq:prompt-attenuation-model}
\end{equation}
where the attenuation factor $\gamma_t$ is shared across vocabulary tokens.
Taking logarithms gives $u_t^d(v)=\gamma_tu_t(v)+\kappa_t$, where $\kappa_t$ is
independent of $v$.

Let $\bar u_t=\sum_v q_t(v)u_t(v)$ and
$\bar u_t^d=\sum_v q_t(v)u_t^d(v)$.
Centering both scores under the same weights $q_t$ removes the normalization offset:
$u_t^d(v)-\bar u_t^d=\gamma_t[u_t(v)-\bar u_t]$ under the model.
For the observed full-vocabulary outputs, we extract the affine component through the
weighted least-squares slope:
\begin{equation}
    \widehat\gamma_t
    :=
    \frac{
        \sum_{v\in\mathcal V}q_t(v)
        \bigl(u_t(v)-\bar u_t\bigr)
        \bigl(u_t^d(v)-\bar u_t^d\bigr)
    }{
        \sum_{v\in\mathcal V}q_t(v)
        \bigl(u_t(v)-\bar u_t\bigr)^2
    }.
    \label{eq:gamma-estimator}
\end{equation}
We estimate \cref{eq:gamma-estimator} at each trajectory prefix and then average the
resulting values over positions within each input and across inputs.
A confidence interval whose upper endpoint is below $1$ provides evidence that the
mean affine-response coefficient is below $1$.

\renewcommand{\lstlistingname}{Algorithm}
\begin{lstlisting}[
    language=Python,
    basicstyle=\ttfamily\scriptsize,
    keywordstyle=\bfseries,
    commentstyle=\itshape,
    columns=fullflexible,
    keepspaces=true,
    showstringspaces=false,
    breaklines=true,
    frame=single,
    captionpos=t,
    caption={PyTorch-style estimation of the PMI attenuation coefficient.},
    label={alg:estimate-gamma}
]
@torch.no_grad()
def mean_attenuation(model, data, prompt, variance_tol=1e-6):
    gamma_x = []

    for x, c in data:
        # Sample one on-policy student rollout without PI or prompt.
        # The rollout may instead be pre-sampled and stored with x.
        y = sample_response(model, x=x)

        # Each call returns [T, vocabulary_size] log probabilities.
        log_p = teacher_forced_log_probs(model, x=x, response=y)
        log_q = teacher_forced_log_probs(model, x=x, pi=c, response=y)
        log_q_d = teacher_forced_log_probs(
            model, x=x, pi=c, response=y, system_prompt=prompt)

        gamma_t = []
        for t in evaluated_positions(y):
            q_t = log_q[t].exp()
            u_t = log_q[t] - log_p[t]
            # Both scores use the unprompted reference.
            u_t_d = log_q_d[t] - log_p[t]

            u_t = u_t - (q_t * u_t).sum()
            u_t_d = u_t_d - (q_t * u_t_d).sum()
            pi_variance = (q_t * u_t.square()).sum()

            # The attenuation slope is not identifiable here.
            if pi_variance <= variance_tol:
                continue

            gamma_t.append((q_t * u_t * u_t_d).sum()
                           / pi_variance)

        if gamma_t:
            gamma_x.append(torch.stack(gamma_t).mean())

    if not gamma_x:
        raise ValueError("No identifiable attenuation slopes.")

    gamma_x = torch.stack(gamma_x)
    return gamma_x.mean(), confidence_interval(gamma_x)
\end{lstlisting}

\subsection{Setup and results}\label{sec:gamma-results}

We estimate the PMI attenuation coefficient using \cref{eq:gamma-estimator} for all three models
before task-specific training to assess the effect of prompting on the initial policies.
For each problem, we use 16 validation rollouts sampled at temperature $0.6$ and top-$p$ $0.95$.
We pair a failed rollout $y$ with the verified solution $c$ from a successful rollout of the same problem,
following the OPSD training setup.
The teacher prompt $d$ is the single prompt used for the corresponding task, placed after $c$ as during training.

We obtain the full-vocabulary distributions $p_t$, $q_t$, and $q_t^d$ by teacher forcing on the same
rollout $y$ at 128 uniformly spaced prefixes per problem.
Problems without both a successful and a failed rollout are excluded; \cref{tab:gamma} reports the resulting problem counts.
We average the per-prefix estimates within each problem and then equally across problems, and compute
95\% bootstrap CIs by resampling problems.
The original task inputs for text-to-SQL and LiveCodeBench contain no system message;
for grading and chemistry, the existing system message is retained in its own role.
For Qwen3-4B on LiveCodeBench, we pool rollouts from four independent validations of the initial policy,
matching problems by exact prompt.
For Qwen3-8B and OLMo-3-7B-Instruct, we use a fresh validation of the initial policy for each task.

\begin{table}[h]\centering\small\setlength{\tabcolsep}{5pt}
\caption{Estimated PMI attenuation before task-specific training.
The mean coefficient $\widehat\gamma$ is obtained by averaging the per-prefix estimates in
\cref{eq:gamma-estimator} within each problem and then across problems, with 95\% bootstrap CIs over problems.}
\label{tab:gamma}
\begin{tabular}{l c c}
\toprule
Task & Problems & $\widehat\gamma$ [95\% CI]\\
\midrule
\multicolumn{3}{l}{\textit{Qwen3-4B}}\\[1pt]
Grading        & 100 & 0.694 [0.654, 0.736]\\
Text-to-SQL    &  46 & 0.684 [0.637, 0.737]\\
LiveCodeBench  &  38 & 0.845 [0.819, 0.870]\\
Chemistry      & 100 & 0.954 [0.932, 0.975]\\
\midrule
\multicolumn{3}{l}{\textit{Qwen3-8B}}\\[1pt]
Grading        & 100 & 0.736 [0.705, 0.766]\\
Text-to-SQL    &  42 & 0.656 [0.615, 0.696]\\
LiveCodeBench  &  36 & 0.936 [0.915, 0.958]\\
Chemistry      & 100 & 0.876 [0.859, 0.892]\\
\midrule
\multicolumn{3}{l}{\textit{OLMo-3-7B-Instruct}}\\[1pt]
Grading        & 100 & 0.732 [0.714, 0.750]\\
Text-to-SQL    &  96 & 0.666 [0.644, 0.687]\\
LiveCodeBench  &  53 & 0.929 [0.914, 0.942]\\
Chemistry      & 100 & 0.889 [0.877, 0.900]\\
\bottomrule
\end{tabular}
\end{table}

\Cref{tab:gamma} shows that the 95\% CI lies entirely within $(0,1)$ for every combination of model and task,
providing evidence of average PMI attenuation under teacher prompting.
Across all three models, attenuation is stronger on grading and text-to-SQL
($\widehat\gamma$ from $0.656$ to $0.736$) than on LiveCodeBench and chemistry ($0.845$ to $0.954$).
These results support the average attenuation effect underlying the affine model in \cref{eq:linear_attenuation}.

\section{Proofs of the theoretical results}
\label{app:proofs}

\subsection{Prompted teacher distribution}
\label{app:proof-teacher-distribution}

\begin{proof}[Derivation of \cref{eq:attenuated-teacher-distribution}]
Exponentiating \cref{eq:linear_attenuation} gives
$q_t^d(v)=p_t(v)\exp[\gamma_tu_t(v)+\kappa_t]$.
Normalization requires $\sum_vq_t^d(v)=1$, which determines the offset
$\kappa_t=-\log\sum_{z\in\mathcal V}p_t(z)\exp[\gamma_tu_t(z)]$.
Substituting this offset gives the first expression in
\cref{eq:attenuated-teacher-distribution}.
Using $q_t(v)=p_t(v)\exp[u_t(v)]$ yields the second expression.
\end{proof}

\subsection{Teacher entropy under PMI attenuation}
\label{app:proof-teacher-entropy}

\begin{proof}[Proof of \cref{thm:teacher-entropy}]
Let $Z_t(\lambda)=\sum_zq_t(z)\exp[(\lambda-1)u_t(z)]$ denote the normalizer
in \cref{eq:attenuation-path}.
Since $q_t$ and $u_t$ are fixed as $\lambda$ varies,
\begin{align*}
    \frac{\dd}{\dd\lambda}\log Z_t(\lambda)
    &=\frac{Z_t'(\lambda)}{Z_t(\lambda)}
      =\frac{\sum_zq_t(z)\exp[(\lambda-1)u_t(z)]u_t(z)}{Z_t(\lambda)}\\
    &=\sum_zq_{t,\lambda}(z)u_t(z).
\end{align*}
Taking logarithms of \cref{eq:attenuation-path} gives
$\log q_{t,\lambda}(v)=\log q_t(v)+(\lambda-1)u_t(v)-\log Z_t(\lambda)$.
Consequently,
\begin{align}
    \frac{\dd}{\dd\lambda}\log q_{t,\lambda}(v)
    &=u_t(v)-\sum_zq_{t,\lambda}(z)u_t(z),\nonumber\\
    \frac{\dd q_{t,\lambda}(v)}{\dd\lambda}
    &=q_{t,\lambda}(v)
      \left[u_t(v)-\sum_zq_{t,\lambda}(z)u_t(z)\right].
    \label{eq:app-attenuation-path-derivative}
\end{align}
Next, differentiate $H(q_{t,\lambda})=-\sum_vq_{t,\lambda}(v)\log q_{t,\lambda}(v)$
using the product rule:
\begin{align*}
    \frac{\dd H(q_{t,\lambda})}{\dd\lambda}
    &=-\sum_v\left[
        \frac{\dd q_{t,\lambda}(v)}{\dd\lambda}\log q_{t,\lambda}(v)
        +q_{t,\lambda}(v)\frac{\dd\log q_{t,\lambda}(v)}{\dd\lambda}
      \right]\\
    &=-\sum_v\frac{\dd q_{t,\lambda}(v)}{\dd\lambda}
      \left[\log q_{t,\lambda}(v)+1\right]\\
    &=-\sum_v\frac{\dd q_{t,\lambda}(v)}{\dd\lambda}\log q_{t,\lambda}(v).
\end{align*}
The last equality uses normalization:
$\sum_v\dd q_{t,\lambda}(v)/\dd\lambda
=\dd[\sum_vq_{t,\lambda}(v)]/\dd\lambda=0$.
Substituting \cref{eq:app-attenuation-path-derivative} and expanding the centered term gives
\begin{align}
    \frac{\dd H(q_{t,\lambda})}{\dd\lambda}
    &=-\sum_vq_{t,\lambda}(v)
      \left[u_t(v)-\sum_zq_{t,\lambda}(z)u_t(z)\right]
      \log q_{t,\lambda}(v)\nonumber\\
    &=-\sum_vq_{t,\lambda}(v)u_t(v)\log q_{t,\lambda}(v)\nonumber\\
    &\quad+\left[\sum_zq_{t,\lambda}(z)u_t(z)\right]
      \left[\sum_vq_{t,\lambda}(v)\log q_{t,\lambda}(v)\right]\nonumber\\
    &=-\Cov_{q_{t,\lambda}}\left(u_t,\log q_{t,\lambda}\right).
    \label{eq:app-teacher-entropy-derivative}
\end{align}
Finally, integrating from $\gamma_t$ to $1$ and using the two endpoints yields
\begin{equation}
    H(q_t^d)-H(q_t)
    =-\int_{\gamma_t}^{1}\frac{\dd H(q_{t,\lambda})}{\dd\lambda}\dd\lambda
    =\int_{\gamma_t}^{1}
    \Cov_{q_{t,\lambda}}\left(u_t,\log q_{t,\lambda}\right)\dd\lambda.
    \label{eq:app-teacher-entropy-change}
\end{equation}
\end{proof}

\subsection{Entropy increase under moderate attenuation}
\label{app:proof-local-entropy-increase}

\begin{proof}[Proof of \cref{cor:local-entropy-increase}]
Let $F_t(\lambda):=H(q_{t,\lambda})$ and $C_t:=\Cov_{q_t}(u_t,\log q_t)$.
Under the condition in \cref{cor:local-entropy-increase}, $C_t>0$.
On the finite vocabulary, \cref{eq:attenuation-path} defines a smooth path of distributions
with positive token probabilities, so $F_t$ has a continuous second derivative near $\lambda=1$.
Choose $r_t>0$ such that $F_t''$ is continuous on $[1-r_t,1]$.
Continuity on this closed interval guarantees a finite bound, so we can choose $M_t>0$
such that $|F_t''(\lambda)|\le 2M_t$ throughout the interval.

By \cref{eq:app-teacher-entropy-derivative} and $q_{t,1}=q_t$, we have $F_t'(1)=-C_t$.
For every $\gamma_t\in(1-r_t,1)$, Taylor's theorem and $q_{t,\gamma_t}=q_t^d$ give
\begin{equation}
    \begin{aligned}
        H(q_t^d)-H(q_t)
        &=F_t(\gamma_t)-F_t(1)\\
        &=(\gamma_t-1)F_t'(1)+R_t(\gamma_t)\\
        &=(1-\gamma_t)C_t+R_t(\gamma_t),
    \end{aligned}
    \label{eq:app-local-entropy-expansion}
\end{equation}
where $R_t(\gamma_t)=\tfrac12F_t''(\xi_t)(1-\gamma_t)^2$
for some $\xi_t\in(\gamma_t,1)$.
Since $\xi_t\in[1-r_t,1]$, the second-derivative bound yields
$|R_t(\gamma_t)|\le M_t(1-\gamma_t)^2$.

Choose $\delta_t=\min\{r_t,C_t/(2M_t)\}$.
For every $\gamma_t\in(1-\delta_t,1)$, the restriction $\delta_t\le r_t$
ensures that the remainder bound applies, while $\delta_t\le C_t/(2M_t)$ ensures
$M_t(1-\gamma_t)\le C_t/2$.
Consequently,
\begin{equation*}
    \begin{aligned}
        H(q_t^d)-H(q_t)
        &\ge (1-\gamma_t)C_t-|R_t(\gamma_t)|\\
        &\ge (1-\gamma_t)C_t-M_t(1-\gamma_t)^2\\
        &\ge \frac{C_t}{2}(1-\gamma_t)>0.
    \end{aligned}
\end{equation*}
\end{proof}

\section{Testing the covariance condition}
\label{app:covariance-test}

\subsection{Experimental setup}
\label{sec:cov-setup}

We examine the covariance condition in \cref{cor:local-entropy-increase} at prefixes where
PI changes the model's sampled continuation.
The study uses the initial untrained models.
For each input problem $x$, we pair a PI-free rollout $y$ with a verified solution $c$ from a
different successful rollout of the same problem, including both successful and failed
responses as $y$.
We obtain $p_t$ and $q_t$ by teacher forcing the same response under the two conditioning
contexts, using full-vocabulary log-probabilities at temperature $1$.

For \cref{fig:pmi-heatmap}, we bin sampled tokens by $p_t(y_t)$ and $q_t(y_t)$
and color each bin by its mean PMI $u_t(y_t)$, following AntiSD~\citep{shen2026anti}.
We group prefixes by sampled-token PMI magnitude, using $|u_t(y_t)|>1$ and
$|u_t(y_t)|\leq1$.
The threshold selects continuations whose probability changes by more than a factor of
$e$ under PI.
The \emph{Prefix share} column in \cref{tab:cov-strat} reports the percentage of all evaluated
prefixes with $|u_t(y_t)|>1$.

The sampled-token score determines the prefix group, while the covariance is computed
over the full vocabulary under $q_t$ in both groups.
With $\bar u_t=\sum_v q_t(v)u_t(v)$, we evaluate
\begin{equation}
    \Cov_{q_t}(u_t,\log q_t)
    =\sum_{v\in\mathcal V}q_t(v)
    \bigl(u_t(v)-\bar u_t\bigr)
    \bigl(\log q_t(v)+H(q_t)\bigr).
    \label{eq:covariance-diagnostic}
\end{equation}
For each group, we report the mean covariance and the percentage of prefixes with
positive covariance.

\subsection{Results and discussion}
\label{sec:cov-results}

\begin{figure}[h]\centering
\includegraphics[width=\textwidth]{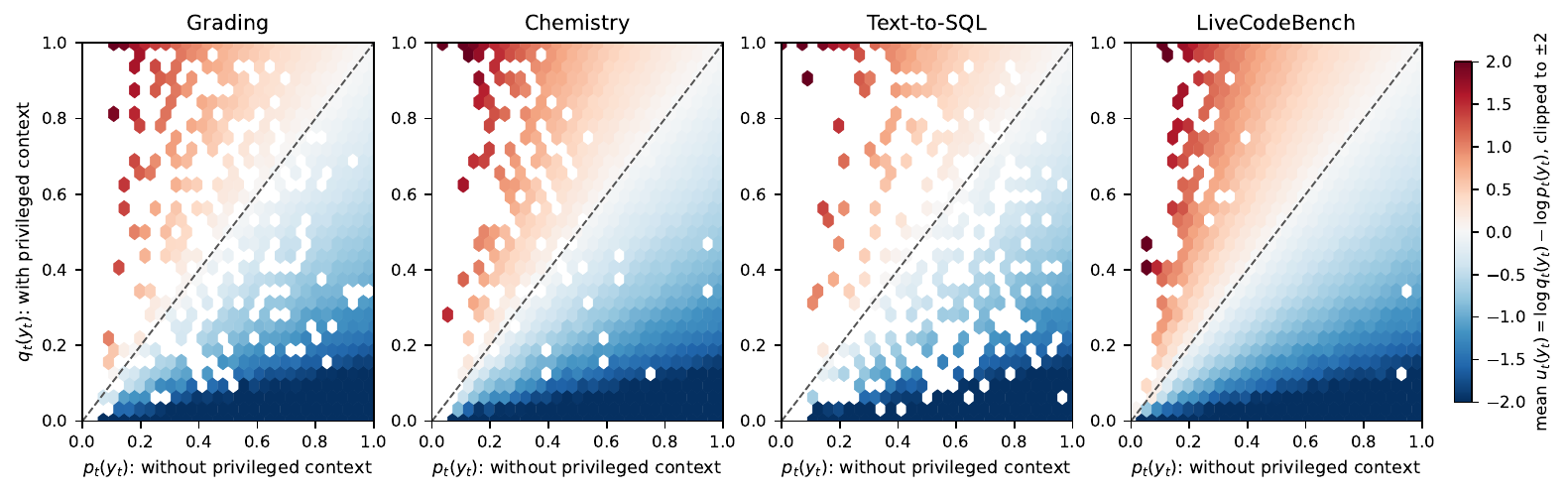}
\caption{Sampled-token PMI on the initial Qwen3-4B policy, following the visualization
of \citet{shen2026anti}.
Hexagonal bins group observations by $p_t(y_t)$ and $q_t(y_t)$; color shows the mean
$u_t(y_t)$, clipped to $[-2,2]$.
The dashed diagonal indicates unchanged probability.}
\label{fig:pmi-heatmap}
\end{figure}

\begin{table}[h]\centering\small\setlength{\tabcolsep}{3.5pt}
\caption{Teacher covariance grouped by sampled-token PMI magnitude.
The $|u_t(y_t)|>1$ group captures strong distillation signals despite its
small share of prefixes.
Prefix share gives the percentage of all evaluated prefixes with $|u_t(y_t)|>1$.
$\Cov$ denotes mean $\Cov_{q_t}(u_t,\log q_t)$.
$\Cov>0$ gives the percentage of prefixes with positive covariance within each group.}
\label{tab:cov-strat}
\begin{adjustbox}{max width=\textwidth}\begin{tabular}{l ccc cc}
\toprule
& \multicolumn{3}{c}{$|u_t(y_t)|>1$} & \multicolumn{2}{c}{$|u_t(y_t)|\leq1$}\\
\cmidrule(lr){2-4}\cmidrule(lr){5-6}
Task & Prefix share (\%) & $\Cov$ & $\Cov>0$ & $\Cov$ & $\Cov>0$\\
\midrule
\multicolumn{6}{l}{\textit{Qwen3-4B}}\\[1pt]
Grading        & 11.2\,\% & $+0.514$ & 90\,\% & $-0.083$ & 55\,\%\\
Text-to-SQL    & 6.4\,\% & $+0.449$ & 86\,\% & $-0.052$ & 61\,\%\\
LiveCodeBench  & 5.1\,\% & $+0.486$ & 87\,\% & $-0.049$ & 43\,\%\\
Chemistry      & 8.0\,\% & $+0.445$ & 90\,\% & $-0.048$ & 56\,\%\\
\midrule
\multicolumn{6}{l}{\textit{Qwen3-8B}}\\[1pt]
Grading        & 8.9\,\% & $+0.442$ & 89\,\% & $-0.052$ & 50\,\%\\
Text-to-SQL    & 7.2\,\% & $+0.435$ & 89\,\% & $-0.035$ & 70\,\%\\
LiveCodeBench  & 4.6\,\% & $+0.430$ & 84\,\% & $-0.048$ & 44\,\%\\
Chemistry      & 6.8\,\% & $+0.510$ & 91\,\% & $-0.021$ & 65\,\%\\
\midrule
\multicolumn{6}{l}{\textit{OLMo-3-7B-Instruct}}\\[1pt]
Grading        & 13.6\,\% & $+0.533$ & 79\,\% & $-0.108$ & 44\,\%\\
Text-to-SQL    & 9.8\,\% & $+0.501$ & 80\,\% & $-0.108$ & 46\,\%\\
LiveCodeBench  & 8.7\,\% & $+0.471$ & 81\,\% & $-0.056$ & 47\,\%\\
Chemistry      & 14.0\,\% & $+0.579$ & 82\,\% & $-0.101$ & 47\,\%\\
\bottomrule
\end{tabular}\end{adjustbox}
\end{table}

\Cref{fig:pmi-heatmap} illustrates how PI reinforces some sampled continuations and
suppresses others, consistent with the token-level pattern reported by AntiSD~\citep{shen2026anti}.
\Cref{tab:cov-strat} connects this sampled-token view to the full-vocabulary covariance
condition by grouping prefixes according to PMI magnitude.
PMI governs token-level distillation advantages, making the $|u_t(y_t)|>1$ group
particularly informative about strong supervision signals.
These prefixes account for approximately 5 to 14\,\% of evaluated positions,
and most of their sampled tokens have negative PMI.

Within this group, mean covariance is positive in all twelve model and task settings,
ranging from $0.43$ to $0.58$, and 79 to 91\,\% of prefixes have positive covariance.
Thus, where the sampled-token PMI has large magnitude, stronger PI support tends to
align with higher teacher log-probability across the vocabulary.
The complementary group has smaller, negative mean covariance, ranging from $-0.11$ to
$-0.02$.
Grouping by PMI magnitude therefore reveals where the covariance condition in
\cref{cor:local-entropy-increase} is most frequently satisfied.
Under the affine attenuation model, this positive alignment implies that sufficiently
small attenuation increases teacher entropy.

\section{Training configuration}\label{app:hparams}

\Cref{tab:hparams} lists the per-task training settings. Every run, on every model and for every method, uses $8$ rollouts per prompt, a batch size of 32, a prompt length of 2048 tokens, a sampling temperature of 1 during training, no KL regularization term, and early stopping after four consecutive validations without improvement. Evaluation always draws 16 samples per problem at temperature 0.6 and top-$p$ 0.95. Grading, chemistry, and SQL share one configuration. The Python task follows the OPSD code recipe \citep{sdpo}: the reverse KL ($\alpha{=}1$), a ten times smaller learning rate without warm-up, no entropy bonus, no advantage normalization, one optimizer update per rollout, and a longer response budget. Experiments on grading, chemistry, and SQL are run on two NVIDIA H100 (80\,GB) GPUs. Python experiments, which require longer outputs, are run on two NVIDIA H200 (141\,GB) GPUs.

\textbf{Acknowledgement}\quad We acknowledge the computational resources provided by King's Computational Research, Engineering and Technology Environment (CREATE)~\citep{kcl2026create}.

\begin{table}[H]\centering\small\setlength{\tabcolsep}{6pt}
\caption{Task-specific training settings. Settings not listed are shared by all runs (see text).}
\label{tab:hparams}
\begin{tabular}{l cccc}
\toprule
& Grading & Chemistry & SQL & Python\\
& (ASAP-SAS) & (SciKnowEval) & (Spider) & (LiveCodeBench)\\
\midrule
Divergence skew $\alpha$ & 0.5 & 0.5 & 0.5 & 1.0\\
Learning rate & $10^{-5}$ & $10^{-5}$ & $10^{-5}$ & $10^{-6}$\\
Warm-up steps & 10 & 10 & 10 & 0\\
Entropy bonus & $10^{-3}$ & $10^{-3}$ & $10^{-3}$ & 0\\
Advantage normalization & True & True & True & False \\
PPO mini-batch size & 32 & 32 & 32 & 1\\
Max.\ response length (tokens) & 2048 & 2048 & 2048 & 8192\\
Max.\ epochs & 4 & 4 & 4 & 30\\
Validation every (steps) & 20 & 20 & 20 & 5\\
\bottomrule
\end{tabular}
\end{table}

\section{Student entropy values}\label{app:entropy}

\Cref{tab:entropy} lists the student entropies plotted in \cref{fig:entropy}, with each method's ratio to OPSD.

\begin{table}[H]\centering\small\setlength{\tabcolsep}{3.4pt}
\caption{Student entropy $H(p)$ (nats per token) of the distilled student models; the Python runs follow the protocol noted in \cref{sec:setup}.
Each \emph{ratio} is relative to OPSD: the method's value divided by the OPSD value for the same model and task.
$<$ denotes a ratio below $0.1$.
\METHOD{} generally yields higher student entropy than OPSD.}
\label{tab:entropy}
\begin{adjustbox}{max width=\textwidth}\begin{tabular}{l rr rr rr rr}
\toprule
& \multicolumn{2}{c}{\textbf{Grading}} & \multicolumn{2}{c}{\textbf{Chemistry}} & \multicolumn{2}{c}{\textbf{SQL}} & \multicolumn{2}{c}{\textbf{Python}}\\
\cmidrule(lr){2-3}\cmidrule(lr){4-5}\cmidrule(lr){6-7}\cmidrule(lr){8-9}
Method & $H(p)$ & ratio & $H(p)$ & ratio & $H(p)$ & ratio & $H(p)$ & ratio\\
\midrule
\multicolumn{9}{l}{\textit{Fine-tuned model: Qwen3-4B}}\\[1pt]
OPSD & 0.10 & 1.0 & 0.07 & 1.0 & 0.03 & 1.0 & 0.32 & 1.0\\
AntiSD & 0.13 & 1.3 & 0.00 & $<$ & 0.10 & 3.3 & 0.40 & 1.2\\
\rowcolor{ptopsdblue}
\METHOD{} (single) & 0.09 & 0.9 & 0.49 & 7.0 & 0.49 & 16.0 & 0.49 & 1.5\\
\rowcolor{ptopsdblue}
\METHOD{} (multi) & 0.10 & 1.0 & 0.48 & 6.9 & 0.54 & 18.0 & 0.24 & 0.8\\
\midrule
\multicolumn{9}{l}{\textit{Fine-tuned model: OLMo-3-7B-Instruct}}\\[1pt]
OPSD & 0.89 & 1.0 & 0.81 & 1.0 & 0.19 & 1.0 & 0.58 & 1.0\\
AntiSD & 0.84 & 0.9 & 0.73 & 0.9 & 0.40 & 2.1 & 0.57 & 1.0\\
\rowcolor{ptopsdblue}
\METHOD{} (single) & 1.16 & 1.3 & 1.17 & 1.4 & 0.39 & 2.1 & 0.56 & 1.0\\
\rowcolor{ptopsdblue}
\METHOD{} (multi) & 1.43 & 1.6 & 1.36 & 1.7 & 0.66 & 3.5 & 0.60 & 1.0\\
\midrule
\multicolumn{9}{l}{\textit{Fine-tuned model: Qwen3-8B}}\\[1pt]
OPSD & 0.31 & 1.0 & 0.42 & 1.0 & 0.41 & 1.0 & 0.47 & 1.0\\
AntiSD & 0.06 & 0.2 & 0.12 & 0.3 & 0.03 & $<$ & 0.47 & 1.0\\
\rowcolor{ptopsdblue}
\METHOD{} (single) & 0.31 & 1.0 & 0.64 & 1.5 & 0.63 & 1.5 & 0.50 & 1.1\\
\rowcolor{ptopsdblue}
\METHOD{} (multi) & 0.78 & 2.5 & 1.19 & 2.8 & 0.53 & 1.3 & 0.48 & 1.0\\
\bottomrule
\end{tabular}\end{adjustbox}
\end{table}

\section{Teacher entropy values}\label{app:teacher-entropy}

\Cref{tab:teacher-entropy} reports the entropy of the teacher itself, measured under the exact conditioning used in training and without any training. For 120 test problems of each task we sample 8 student rollouts from the untrained Qwen3-4B at the training temperature, take a correct rollout of the same problem as the solution $c$ (as in training; 90 / 101 / 96 / 84 problems have one on grading / chemistry / SQL / Python), rebuild the teacher input for each condition (question and $c$; additionally the single prompt $d$ or one aspect prompt $d_k$ after $c$), and sample one teacher response per problem. $H(q)$ is the mean per-token Shannon entropy of the teacher over its response, estimated from the top-20 log-probabilities at each position (nats); the multi-prompt row is the mean over the aspect prompts, which are spread evenly over the rollouts in training. The prompts raise the teacher's entropy on every task: by 1.6 to 2.0 times with the single prompt and by 3.3 to
3.4 times with the aspect prompts on grading and chemistry

\begin{table}[H]\centering\small\setlength{\tabcolsep}{3.4pt}
\caption{Teacher entropy $H(q)$ (nats per token) of the untrained Qwen3-4B teacher, conditioned on the question and
a correct solution $c$ (OPSD) and additionally on the training prompt (\METHOD{}). Each \emph{ratio} is relative to the
OPSD teacher for the same task. $>$ denotes a ratio above 100: on SQL the OPSD teacher reproduces the query in
$c$ almost verbatim ($H(q){=}0.0001$). The prompts raise the teacher's entropy on every task; the two Python
prompts that ask for a minimal program ($d_1$, $d_2$) are the exception.}
\label{tab:teacher-entropy}
\begin{adjustbox}{max width=\textwidth}\begin{tabular}{l rr rr rr rr}
\toprule
& \multicolumn{2}{c}{\textbf{Grading}} & \multicolumn{2}{c}{\textbf{Chemistry}} & \multicolumn{2}{c}{\textbf{SQL}} & \multicolumn{2}{c}{\textbf{Python}}\\
\cmidrule(lr){2-3}\cmidrule(lr){4-5}\cmidrule(lr){6-7}\cmidrule(lr){8-9}
Teacher & $H(q)$ & ratio & $H(q)$ & ratio & $H(q)$ & ratio & $H(q)$ & ratio\\
\midrule
\multicolumn{9}{l}{\textit{Model: Qwen3-4B}}\\[1pt]
OPSD ($c$) & 0.024 & 1.0 & 0.038 & 1.0 & 0.000 & 1.0 & 0.077 & 1.0\\
\rowcolor{ptopsdblue}
\METHOD{} (single), $c+d$ & 0.041 & 1.7 & 0.075 & 2.0 & 0.053 & $>$ & 0.125 & 1.6\\
\rowcolor{ptopsdblue}
\METHOD{} (multi), mean over $c+d_k$ & 0.078 & 3.3 & 0.128 & 3.4 & 0.112 & $>$ & 0.093 & 1.2\\
\quad $c+d_1$ & 0.074 & 3.1 & 0.131 & 3.4 & 0.079 & $>$ & 0.030 & 0.4\\
\quad $c+d_2$ & 0.101 & 4.2 & 0.131 & 3.4 & 0.127 & $>$ & 0.035 & 0.5\\
\quad $c+d_3$ & 0.052 & 2.2 & 0.124 & 3.3 & 0.035 & $>$ & 0.166 & 2.1\\
\quad $c+d_4$ & 0.086 & 3.6 & \multicolumn{2}{c}{---} & 0.206 & $>$ & 0.142 & 1.8\\
\bottomrule
\end{tabular}\end{adjustbox}
\end{table}

\section{Prompt-set ablation: all metrics}\label{app:promptset}
\Cref{tab:promptset} lists Pass@1, Maj@16 and Worst@16 for every cell of the ablation of \cref{sec:promptset}; \cref{fig:promptset} plots the Pass@1 column.

\begin{table}[ht]\centering\small\setlength{\tabcolsep}{4.5pt}
\caption{Prompt sets for the teacher on Qwen3-4B (\%), full numbers for \cref{fig:promptset}. $d_k$ alone: every rollout is trained toward the teacher prompted with that one aspect prompt. Concatenation: $d_1,\ldots,d_K$ joined into a single prompt. Chemistry only has $K{=}3$.}
\label{tab:promptset}
\begin{adjustbox}{max width=\textwidth}\begin{tabular}{l rrr rrr rrr rrr}
\toprule
& \multicolumn{3}{c}{\textbf{Grading}} & \multicolumn{3}{c}{\textbf{Chemistry}} & \multicolumn{3}{c}{\textbf{SQL}} & \multicolumn{3}{c}{\textbf{Python}}\\
\cmidrule(lr){2-4}\cmidrule(lr){5-7}\cmidrule(lr){8-10}\cmidrule(lr){11-13}
Teacher prompt & P@1 & M@16 & W@16 & P@1 & M@16 & W@16 & P@1 & M@16 & W@16 & P@1 & M@16 & W@16\\
\midrule
No prompt (OPSD) & 71.9 & 71.8 & 61.3 & 81.7 & 81.9 & 79.2 & 69.8 & 69.7 & 64.9 & 61.4 & 61.9 & 51.8\\
$d_1$ alone & 71.5 & 71.8 & 63.3 & 84.1 & 84.3 & 79.7 & 79.1 & 79.8 & 72.4 & 60.5 & 61.3 & 51.1\\
$d_2$ alone & 74.0 & 73.8 & 65.2 & 81.5 & 81.5 & 78.8 & 80.5 & 81.2 & 73.5 & 62.3 & 63.5 & 53.2\\
$d_3$ alone & 78.3 & 78.8 & 73.1 & 73.9 & 76.3 & 53.0 & 72.9 & 73.0 & 68.1 & 63.3 & 64.5 & \B{55.2}\\
$d_4$ alone & 72.7 & 76.0 & 43.7 & --- & --- & --- & 76.0 & 77.0 & 64.1 & 62.9 & 65.1 & 48.1\\
$d_1,\ldots,d_K$ concatenated & 67.4 & 67.4 & 65.6 & 80.4 & 80.9 & 73.7 & 79.2 & 80.4 & 70.9 & 60.6 & 60.5 & 52.1\\
\midrule
\rowcolor{ptopsdblue}
Single prompt $d$ & 78.7 & 78.7 & 73.8 & \B{86.1} & \B{86.3} & \B{81.8} & 80.0 & 80.6 & 71.7 & \B{64.1} & \B{65.2} & 54.5\\
\rowcolor{ptopsdblue}
Prompt set $\{d_k\}$ & \B{83.0} & \B{83.5} & \B{76.6} & 85.8 & 86.2 & 81.3 & \B{81.0} & \B{81.7} & \B{74.4} & 63.8 & 64.7 & 54.8\\
\bottomrule
\end{tabular}\end{adjustbox}
\end{table}

\section{Control: the teacher prompt without privileged information}\label{app:noc}

\METHOD{} adds a system prompt $d$ to a teacher that already holds the privileged information $c$. To test whether
$d$ could itself act as privileged information, we train the multi-prompt setting on Qwen3-4B with a teacher that
sees the question and the aspect prompts $d_k$ but \emph{no} $c$, under the protocol of \cref{sec:setup} and with
the same rollouts entering the distillation loss. The teacher then knows nothing that the student does not, apart
from the instructions in $d_k$. \Cref{fig:noc} compares this control with the untrained model, OPSD and \METHOD{}.

\begin{figure}[H]\centering
\includegraphics[width=\textwidth]{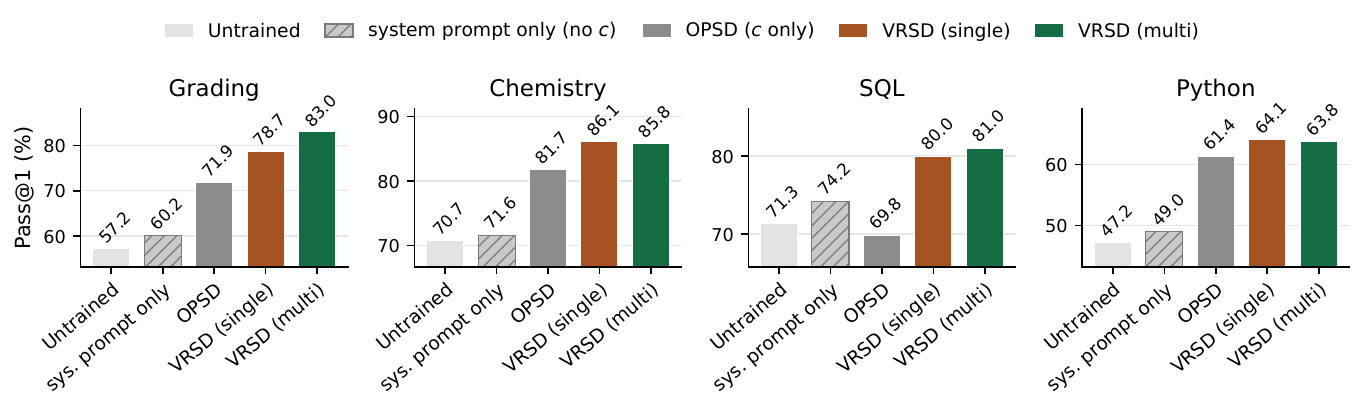}
\vspace{-4mm}
\caption{Pass@1 (\%) on Qwen3-4B of the control whose teacher gets only the system prompts (aspect prompts, no
privileged information $c$), next to the untrained model, OPSD ($c$ only) and \METHOD{} ($c$ and system prompt).}
\label{fig:noc}
\end{figure}

\textbf{A teacher-only system prompt is not privileged information.} Without $c$, the prompted teacher improves
the student by only 1 to 3 points over the untrained model (grading 60.2, chemistry 71.6, SQL 74.2, Python 49.0),
10 to 12 points below OPSD on three of the four tasks and 7 to 23 points below \METHOD{} (multi), although its
prompts carry the full task procedure (rubric checklists, schema and join planning, boundary checks). The
supervisory signal therefore comes from $c$, and the prompt does not replace it.

\textbf{The system prompt acts as a regularizer of how the teacher uses $c$.} Given $c$, the same prompts raise
the student above OPSD on every task, keep the teacher as accurate as the OPSD teacher (\cref{app:teacher-entropy})
and raise its average entropy on every task. The prompt thus does not add information about the answer; it shapes
the distribution through which the answer in $c$ is distilled, keeping the teacher from collapsing onto the
solution and passing the verification and reconsideration steps to the student
(\cref{sec:qualitative}). This is the role assumed by our analysis, in which $d$ attenuates the influence of the
privileged information on the teacher rather than supplying information of its own.

\section{Training prompts}\label{app:prompts}

The prompts $d$ used in training, one block per task: the single prompt $d$ followed by the
prompts $d_1,\ldots,d_K$ of the multi-prompt setting. The same prompts are used for all three models,
except that OLMo-3-7B-Instruct and Qwen3-8B use the second version of $d_4$ on grading and of $d_3$ on chemistry.

\medskip\noindent\textbf{Grading (ASAP-SAS), $K{=}4$.}\par\nopagebreak
\begin{lstlisting}[basicstyle=\ttfamily\scriptsize, breaklines=true, breakatwhitespace=true, breakindent=0pt, columns=fullflexible, keepspaces=true, frame=single, framerule=0.3pt, xleftmargin=2pt, xrightmargin=2pt, aboveskip=2pt, belowskip=2pt, escapeinside={(*@}{@*)}, literate={—}{{\textrm{---}}}1]
(*@\textbf{$d$}@*)  You are a careful, calibrated examiner. Before committing to a mark, work through a disciplined grading routine and resist the urge to guess the most common score. Follow these steps in your reasoning:
1. Restate each distinct credit-worthy element the rubric requires, as an explicit checklist.
2. Go through the student's answer and, for every rubric element, quote or point to the exact span that satisfies it — or note that nothing does. Do not credit an element on a vague impression; require concrete evidence in the student's words.
3. Count only the elements you found genuine evidence for. Watch for two failure modes: giving credit the answer did not earn (over-crediting), and missing a valid point phrased in the student's own imperfect wording (under-crediting).
4. Map that count to the rubric's mark bands, then sanity-check: would a second examiner reading the same spans agree? Reconsider once if unsure.
Only after this routine, output the final integer mark.

(*@\textbf{$d_1$}@*)  You are an examiner who grades strictly by the rubric's structure. Before giving a mark, first rewrite the rubric as an explicit numbered checklist of the distinct credit-worthy elements it requires. Then walk the checklist top to bottom and decide, element by element, whether the student's answer satisfies it. Do not think about the overall impression or the final number until every element has been judged individually. Only then count the satisfied elements and map that count to the rubric's mark bands.

(*@\textbf{$d_2$}@*)  You are an examiner guarding against anchoring on the most common score. Before giving a mark, explicitly consider the full range of marks the rubric allows, including the extremes. State what an answer at the top band and an answer at the bottom band would look like for this question, then ask honestly which one this student's answer is closer to. Resist the pull toward the middle or toward whichever score feels most typical; let the answer's actual content decide the band.

(*@\textbf{$d_3$}@*)  You are an examiner who credits nothing without quoted evidence. Before giving a mark, for every point you intend to award, quote the exact span of the student's answer that earns it. If you cannot quote a span, you may not award that point. Be careful to recognise valid content expressed in the student's own imperfect wording, spelling, or phrasing — quote it anyway and credit it. Award the mark strictly from the quoted spans you collected.

(*@\textbf{$d_4$}@*)  You are an examiner who challenges your own first judgement. Form a provisional mark, then argue against it: build the strongest case that the mark is too high (what did you credit that the answer never actually earned?) and the strongest case that it is too low (what valid point did you overlook or dismiss for poor wording?). Weigh both counter-cases explicitly before settling. Change the provisional mark whenever the counter-argument is the stronger one.
\end{lstlisting}

\noindent The second version of $d_4$ (OLMo-3-7B-Instruct, Qwen3-8B):
\begin{lstlisting}[basicstyle=\ttfamily\scriptsize, breaklines=true, breakatwhitespace=true, breakindent=0pt, columns=fullflexible, keepspaces=true, frame=single, framerule=0.3pt, xleftmargin=2pt, xrightmargin=2pt, aboveskip=2pt, belowskip=2pt, escapeinside={(*@}{@*)}, literate={—}{{\textrm{---}}}1]
(*@\textbf{$d_4 \text{ (v2)}$}@*)  You are an examiner who tests a mark against its nearest alternative. Once you have a mark in mind, name the adjacent mark a second examiner would most plausibly argue for, and state exactly what would have to be true of the student's answer for THAT mark to be right — which further rubric element would have to be credited, or which credited element would have to be withdrawn. Then check that condition against what the student actually wrote and show precisely why it does not hold. Adjust your mark only if that check uncovers a rubric element you genuinely mis-scored, not merely because the alternative can be argued for.
\end{lstlisting}

\medskip\noindent\textbf{Text-to-SQL (Spider), $K{=}4$.}\par\nopagebreak
\begin{lstlisting}[basicstyle=\ttfamily\scriptsize, breaklines=true, breakatwhitespace=true, breakindent=0pt, columns=fullflexible, keepspaces=true, frame=single, framerule=0.3pt, xleftmargin=2pt, xrightmargin=2pt, aboveskip=2pt, belowskip=2pt, escapeinside={(*@}{@*)}, literate={—}{{\textrm{---}}}1]
(*@\textbf{$d$}@*)  You are an expert SQL analyst. Before committing to a query, work through a disciplined routine and resist writing SQL from the question's surface wording. Follow these steps in your reasoning:
1. Ground every entity in the question to the actual schema: name the exact tables and columns you will use, and note their types. Do not invent a column that is not in the schema; map synonyms in the question to the real column names.
2. Work out the join path explicitly: for every pair of tables you touch, state the foreign-key columns that link them, and join only along those keys so no rows are lost or multiplied.
3. Pin down the filtering and shaping: translate each condition in the question into the correct WHERE predicate (watch value matching, text vs number, NULLs), and decide GROUP BY / aggregate / HAVING / ORDER BY / LIMIT deliberately — a "how many" needs COUNT, a "most/least/top" needs ORDER BY with LIMIT, a "per X" needs GROUP BY X.
4. Draft the query, then verify it by tracing what it returns: does the result answer exactly what was asked, at the right granularity, with no missing or duplicated rows? Check the boundary cases (empty result, ties, duplicates). Revise the draft whenever the trace disagrees with the question.
Only after this routine, output the final query inside a ```sql code block.

(*@\textbf{$d_1$}@*)  You are an analyst who grounds everything in the schema before writing any SQL. Before drafting the query, list the exact tables and columns you will use and note their types, reading them straight from the schema. Map every entity in the question to a real column name — never invent a column that is not there, and translate the question's synonyms into the schema's actual names. Only once every needed field is pinned to a real schema column do you write the query.

(*@\textbf{$d_2$}@*)  You are an analyst who gets the join path right above all else. Before drafting the query, identify every table the answer must touch, and for each pair of them state explicitly the foreign-key columns that link them. Join only along those keys, and check that no join loses rows that should be kept or multiplies rows that should be unique. Prefer the shortest correct join path through the foreign keys; do not connect tables on columns that are not genuine keys.

(*@\textbf{$d_3$}@*)  You are an analyst who is meticulous about filtering and shaping the result. Before drafting the query, translate each condition in the question into the correct WHERE predicate, watching for value matching, text-vs-number comparisons, and NULLs. Then decide the aggregation and shaping deliberately: a "how many" needs COUNT, a "most / least / top-k" needs ORDER BY with LIMIT, a "per X" or "for each X" needs GROUP BY X, and a condition on an aggregate needs HAVING. Make the granularity of the output match exactly what the question asks for.

(*@\textbf{$d_4$}@*)  You are an analyst who never trusts a first draft. Write a provisional query, then verify it by tracing what it actually returns: does the result answer exactly what was asked, at the right granularity, with no missing or duplicated rows? Walk the boundary cases — an empty result, ties at the top, duplicate rows, NULLs — and ask whether the query still behaves correctly. Whenever the trace disagrees with the question, revise the query and trace again before settling.
\end{lstlisting}

\medskip\noindent\textbf{LiveCodeBench, $K{=}4$.}\par\nopagebreak
\begin{lstlisting}[basicstyle=\ttfamily\scriptsize, breaklines=true, breakatwhitespace=true, breakindent=0pt, columns=fullflexible, keepspaces=true, frame=single, framerule=0.3pt, xleftmargin=2pt, xrightmargin=2pt, aboveskip=2pt, belowskip=2pt, escapeinside={(*@}{@*)}, literate={—}{{\textrm{---}}}1]
(*@\textbf{$d$}@*)  You are a careful, reflective problem solver. Before committing to an answer, pause and reflect on your reasoning: re-examine each step for mistakes, question assumptions, check your arithmetic, and consider whether your earlier attempt went wrong and why. Only then give your final answer.

(*@\textbf{$d_1$}@*)  You are a solver who fixes the input/output contract before writing a single line of logic. From the sample cases, determine exactly how input arrives (stdin lines? a function signature?) and exactly what must be printed (format, ordering, trailing newline). If the samples show plain stdin/stdout, write a program that reads stdin and prints — never a bare `class Solution` method that only returns a value, or the harness sees no output. Echo the contract back to yourself, then code to it. A solution that violates the I/O contract fails every test no matter how correct the algorithm.

(*@\textbf{$d_2$}@*)  You are a solver who adds no transformation the problem did not ask for. Model exactly the quantities the statement defines — no unit conversions (minutes to seconds, 1-based to 0-based) and no scaling unless a line of the problem explicitly demands it. Every operation in your code must trace back to a specific sentence in the specification; if you cannot point to that sentence, delete the operation. Most regressions come from a plausible-looking extra step that the base solution never needed.

(*@\textbf{$d_3$}@*)  You are a solver who treats loop bounds and indices as the likeliest bug. Before finalizing, re-derive the exact range of every loop and the exact index of every access: does it start at 0 or 1, is the last element included, does `range(n-1)` versus `range(1,n)` shift the pairing you intended? Check the first element, the last element, and the single-element input by hand. An off-by-one that passes the big samples still fails the boundary tests, so verify the edges explicitly.

(*@\textbf{$d_4$}@*)  You are a solver who hand-executes every provided sample before committing. For each sample input, step through your actual code line by line — not your intention — and produce the output it would really print, then compare to the expected output character by character. The cases where they differ are exactly where your model of the problem is wrong; fix the logic, do not patch the symptom, and re-trace. Do not finalize until your code reproduces every sample output exactly.
\end{lstlisting}

\section{Response-Pattern Analysis}\label{app:teachercont}\label{app:behavior-heuristics}

\subsection{Setup}\label{app:pattern-setup}

A response pattern is defined as a deterministic rule $r$ evaluated over a model response $x$, where $r(x) \in \{0, 1\}$. For a set of responses $X$, we quantify its prevalence via the empirical frequency:
\begin{equation}
F_r(X) = \frac{100}{|X|} \sum_{x \in X} r(x) \quad (\%),
\end{equation}
evaluated entirely without model-based judges to ensure zero-cost reproducibility. For rules that inspect the reasoning process, matching is restricted strictly to the reasoning portion of $x$: text inside the \texttt{<reasoning>} tags for grading and chemistry, and text preceding the final code block for SQL and Python. All text matching is case-insensitive. \Cref{tab:pattern-rules} summarizes the exact pattern rules; length denotes the word count of the reasoning segment.

We apply these heuristics across two distinct response sets:
(i)~\emph{Student answers}: evaluating all test-set rollouts (16 samples per problem) from the evaluated checkpoint of each student model (\cref{tab:patterns}).
(ii)~\emph{Teacher continuations}: evaluating how the teacher completes partially generated student rollouts. Specifically, a student answer is cut in half at a sentence boundary, and the untrained Qwen3-4B is tasked with continuing it under three conditioning contexts: (1)~the question alone, (2)~the question with a correct solution $c$ (the OPSD teacher), or (3)~the question and $c$ augmented with either the single prompt $d$ or an aspect prompt $d_k$ (\METHOD{} teachers). Here, $c$ is a verified rollout of the same problem, identical to training. One continuation per problem is sampled at the training temperature across 101 chemistry, 93 grading, 96 SQL, and 85 Python problems, and scored with task-specific verifiers (\cref{tab:teachercont}).

\begin{table}[ht]\centering\small\setlength{\tabcolsep}{4pt}
\caption{Pattern rules. A rule fires when the stated condition holds.}
\label{tab:pattern-rules}
\begin{tabular}{l l p{8.1cm}}
\toprule
Task & Pattern & Rule \\
\midrule
all & Contrast & A contrastive turn: \emph{but, however, although, though, yet, instead, rather, whereas, nonetheless, actually, wait} \\
\midrule
Grading & Checks rubric elements & At least two enumerated items (\texttt{1.}, \texttt{2)}, \emph{element/point/criterion $n$}) in the reasoning \\
 & Quotes student evidence & A quoted span of $\ge 8$ characters that occurs verbatim in the prompt's student answer (case and punctuation ignored) \\
 & Rejects other scores & \emph{does not meet/earn/qualify/warrant, would earn/receive, cannot be awarded, not enough/sufficient for, not worth} \\
\midrule
Chemistry & Discusses all four options & Each of options A--D is referred to individually (\emph{A)}, \emph{option B}, \emph{C is}, \emph{D:}) \\
 & Rules out options & \emph{rule(s) out, eliminate, incorrect because, is wrong because, does not fit/match/apply} \\
 & Checks conclusion & \emph{verify, double-check, re-examine, re-evaluate, re-check, confirm} (any inflection) \\
\midrule
SQL & Names schema tables & At least two table names declared by the prompt's \texttt{CREATE TABLE} statements \\
 & Names join keys & \emph{foreign key}, or \emph{join \ldots on/via/through/using}, or \emph{link \ldots table/key/column} \\
 & Plans query clauses & \emph{group by, having, order by, limit $n$, aggregate} \\
\midrule
Python & Code without preamble & The response opens directly with a markdown code block \\
 & Discusses boundary cases & \emph{edge/corner case, boundary, off-by-one, empty input/list/array/string, single element, $n=0$, $n=1$} \\
 & Checks conclusion & As for chemistry \\
\midrule
SQL, Python & Verifies & Tests the partial draft: \emph{verify, check, double-check, edge case, boundary, sample input/case, test it/this/the, trace, walk through} \\
\bottomrule
\end{tabular}
\end{table}

\subsection{Results: What the Prompt Changes in the Teacher's Text}\label{app:teacher_reason}

\begin{table}[ht]\centering\small\setlength{\tabcolsep}{5pt}
\caption{Teacher continuations of half-written student answers by the untrained Qwen3-4B (\%). Columns indicate the context seen by the teacher before the prefix. Correct: the completed answer is correct or passes unit tests. Other rows denote pattern frequency $F_r$. Chemistry evaluates three aspect prompts.}
\label{tab:teachercont}
\begin{adjustbox}{max width=\textwidth}\begin{tabular}{ll r r >{\columncolor{ptopsdblue}}r >{\columncolor{ptopsdblue}}r >{\columncolor{ptopsdblue}}r >{\columncolor{ptopsdblue}}r >{\columncolor{ptopsdblue}}r}
\toprule
& & \multicolumn{1}{c}{question} & \multicolumn{1}{c}{OPSD teacher} & \multicolumn{5}{c}{\METHOD{} teacher} \\
\cmidrule(lr){3-3}\cmidrule(lr){4-4}\cmidrule(lr){5-9}
Task & & only & $+\,c$ & $+\,c+d$ & $+\,c+d_1$ & $+\,c+d_2$ & $+\,c+d_3$ & $+\,c+d_4$ \\
\midrule
\multirow{3}{*}{Chemistry} & Correct & 79 & 95 & 94 & 95 & 94 & 74 & -- \\
 & Contrast & 50 & 45 & 48 & 67 & 48 & 100 & -- \\
 & Rules out options & 3 & 2 & 6 & 39 & 8 & 6 & -- \\
\midrule
\multirow{3}{*}{Grading} & Correct & 73 & 96 & 96 & 91 & 91 & 89 & 76 \\
 & Contrast & 71 & 66 & 67 & 60 & 66 & 61 & 94 \\
 & Rejects other scores & 30 & 30 & 33 & 23 & 22 & 27 & 41 \\
\midrule
\multirow{3}{*}{SQL} & Correct & 87 & 99 & 100 & 100 & 98 & 99 & 96 \\
 & Contrast & 0 & 2 & 4 & 7 & 6 & 6 & 43 \\
 & Verifies & 5 & 6 & 16 & 4 & 6 & 9 & 73 \\
\midrule
\multirow{3}{*}{Python} & Correct & 62 & 74 & 76 & 78 & 76 & 74 & 77 \\
 & Contrast & 41 & 32 & 40 & 32 & 33 & 46 & 41 \\
 & Verifies & 49 & 52 & 55 & 53 & 48 & 53 & 78 \\
\bottomrule
\end{tabular}\end{adjustbox}
\end{table}

\textbf{The solution supplies the answer, not the reasoning.} 
Conditioning on privileged information $c$ substantially increases continuation accuracy across chemistry, grading, and SQL, but leaves qualitative reasoning behaviors largely unchanged (\cref{tab:teachercont}, first two columns). Without prompting, the teacher produces continuations of comparable length to the unconditioned baseline while exhibiting flat or lower rates of contrastive turns, option elimination, and draft verification. In SQL, this shortcut dependence is explicit: given an unconstrained prefix, the OPSD teacher emits the privileged query verbatim. On Python, where solutions cannot be trivially pasted into existing code, the teacher actively sheds intermediate rationales and jumps straight to implementation. This directly illustrates the shortcut bias formalized in \cref{sec:setup}: privileged information accelerates the teacher onto the final conclusion while bypassing the intermediate reasoning steps that justify it.

\textbf{Prompts restore structured reasoning steps.} 
System prompts systematically restore these intermediate reasoning steps while preserving high continuation accuracy, shaping the continuation in direct alignment with the prompt's semantic objective. On chemistry, the elimination prompt ($d_1$) drives the teacher to explicitly reject incorrect options in over a third of rollouts (compared to negligible rates under OPSD) without compromising correctness. On SQL and Python, verification prompts ($d_4$) induce active sanity checking and test tracing across the majority of partial drafts. These teacher-level behavioral shifts provide the direct supervisory signal that transfers to the distilled student policies in \cref{tab:patterns}. 

The notable exceptions are adversarial prompts (chemistry $d_3$ and grading $d_4$), which explicitly instruct the teacher to challenge its provisional pick. These prompts induce a severe drop in continuation accuracy despite having the correct solution in context, directly explaining the teacher-side origin of the student Worst@16 collapse documented in \cref{tab:promptset}. On Python, no prompt degrades accuracy below the OPSD baseline. \Cref{fig:teachercont} illustrates this contrast with a full rollout.

\begin{figure}[ht]
\begin{tcolorbox}[teacherpromptbox, colframe=black!40, coltitle=black,
    title={Chemistry, gold C. Student prefix, followed by continuations under $c$ and $c+d_1$.}]
\begin{lstlisting}[style=teacherprompttext]
(*@\textbf{Prefix:}@*)  Hydrophobic interactions at chemically heterogeneous interfaces are characterized by the tendency of nonpolar molecules to avoid aqueous environments. The unique signature of hydrophobicity in such contexts is often related to the behavior of water molecules around nonpolar surfaces.

(*@\textbf{OPSD teacher ($+c$):}@*)  Water molecules tend to form highly coordinated, ordered structures around hydrophobic regions to minimize contact with nonpolar surfaces, which is a key feature of hydrophobic interactions. 
Option A refers to polar end groups, which are not a unique signature of hydrophobicity. Option B refers to the chemical identity of the interfacial molecules, which is not specific to hydrophobicity. Option D refers to complex, non-additive contributions, which is a general description rather than a unique signature. Therefore, the correct answer is C.

(*@\textbf{Prompted teacher ($+c+d_1$):}@*)  Option A: Presence of polar end groups. This is incorrect because polar end groups would actually enhance hydrophilicity, not hydrophobicity. Hydrophobicity is associated with the absence of polar groups, not their presence.

Option B: Chemical identity of the interfacial molecules. While chemical identity does influence hydrophobic interactions, it is not a unique signature of hydrophobicity itself. The chemical identity is a factor, but not the defining characteristic of hydrophobicity in heterogeneous interfaces.

Option C: Highly coordinated water structures. This is correct. Water molecules tend to form highly ordered, structured arrangements around nonpolar molecules to minimize contact with nonpolar surfaces. This ordered water structure is a unique signature of hydrophobicity in heterogeneous interfaces.

Option D: Complex, non-additive contributions. While hydrophobic interactions can involve complex contributions, this is more of a general description rather than a unique signature. The specific mechanism of highly coordinated water structures is more directly tied to the phenomenon of hydrophobicity.

After considering all options, the unique signature of hydrophobicity at chemically heterogeneous interfaces is the highly coordinated water structures. This is a specific and defining characteristic of hydrophobic interactions, making option C the correct answer.
\end{lstlisting}
\end{tcolorbox}
\caption{Illustrative teacher continuations on a half-written chemistry problem (gold answer C) completed by untrained Qwen3-4B with solution $c$ (OPSD teacher) and with $c$ plus the elimination prompt $d_1$ (verbatim, reasoning tags removed). While both reach the correct answer, the prompted teacher explicitly evaluates and rules out alternative options.}
\label{fig:teachercont}
\end{figure}

%% file: main.bbl
\begin{thebibliography}{42}
\providecommand{\natexlab}[1]{#1}
\providecommand{\url}[1]{\texttt{#1}}
\expandafter\ifx\csname urlstyle\endcsname\relax
  \providecommand{\doi}[1]{doi: #1}\else
  \providecommand{\doi}{doi: \begingroup \urlstyle{rm}\Url}\fi

\bibitem[Agarwal et~al.(2024)Agarwal, Vieillard, Zhou, Stanczyk, Garea, Geist,
  and Bachem]{opd1}
Rishabh Agarwal, Nino Vieillard, Yongchao Zhou, Piotr Stanczyk, Sabela~Ramos
  Garea, Matthieu Geist, and Olivier Bachem.
\newblock On-policy distillation of language models: Learning from
  self-generated mistakes.
\newblock In \emph{The Twelfth International Conference on Learning
  Representations}, 2024.
\newblock URL \url{https://openreview.net/forum?id=3zKtaqxLhW}.

\bibitem[Barbara et~al.(2012)Barbara, Hamner, Morgan, lynnvandev, and
  Shermis]{hamner2012asapsas}
Barbara, Ben Hamner, Jaison Morgan, lynnvandev, and Mark Shermis.
\newblock The hewlett foundation: Short answer scoring.
\newblock \url{https://kaggle.com/competitions/asap-sas}, 2012.
\newblock Kaggle.

\bibitem[{Core Team} et~al.(2026){Core Team}, Xiao, Xia, Yang, Gao, Shen,
  Zhang, He, Lou, Luo, Wang, Xie, Zhang, Lv, Li, Chen, Xu, Zhang, Liu, Duo,
  et~al.]{mopd1}
{Core Team}, Bangjun Xiao, Bingquan Xia, Bo~Yang, Bofei Gao, Bowen Shen, Chen
  Zhang, Chenhong He, Chiheng Lou, Fuli Luo, Gang Wang, Gang Xie, Hailin Zhang,
  Hanglong Lv, Hanyu Li, Heyu Chen, Hongshen Xu, Houbin Zhang, Huaqiu Liu,
  Jiangshan Duo, et~al.
\newblock Mimo-v2-flash technical report, 2026.
\newblock URL \url{https://arxiv.org/abs/2601.02780}.

\bibitem[{DeepSeek-AI} et~al.(2026){DeepSeek-AI}, Xu, Lin, Xue, Wang, Xu, Wu,
  Zhang, Lin, Dong, Ling, Lu, Zhao, Deng, Hou, Xu, Shao, Ruan, Sun, Dai,
  et~al.]{mopd2}
{DeepSeek-AI}, Anyi Xu, Bangcai Lin, Bing Xue, Bingxuan Wang, Bingzheng Xu,
  Bochao Wu, Bowei Zhang, Chaofan Lin, Chen Dong, Chenchen Ling, Chengda Lu,
  Chenggang Zhao, Chengqi Deng, Chengyu Hou, Chenhao Xu, Chenze Shao, Chong
  Ruan, Conner Sun, Damai Dai, et~al.
\newblock Deepseek-v4: Towards highly efficient million-token context
  intelligence, 2026.
\newblock URL \url{https://arxiv.org/abs/2606.19348}.

\bibitem[Feng et~al.(2025)Feng, Shen, Wang, Zhuang, Tang, Zhang, and
  Ding]{feng2024sciknoweval}
Kehua Feng, Xinyi Shen, Weijie Wang, Xiang Zhuang, Yuqi Tang, Qiang Zhang, and
  Keyan Ding.
\newblock Sciknoweval: A comprehensive dataset for evaluating scientific
  knowledge of large language models.
\newblock In \emph{NeurIPS 2025 AI for Science Workshop}, 2025.
\newblock URL \url{https://openreview.net/forum?id=evsup2BiKo}.

\bibitem[Gou et~al.(2021)Gou, Yu, Maybank, and Tao]{Gou_2021}
Jianping Gou, Baosheng Yu, Stephen~J. Maybank, and Dacheng Tao.
\newblock Knowledge distillation: A survey.
\newblock \emph{International Journal of Computer Vision}, 129\penalty0
  (6):\penalty0 1789--1819, March 2021.
\newblock ISSN 1573-1405.
\newblock \doi{10.1007/s11263-021-01453-z}.
\newblock URL \url{http://dx.doi.org/10.1007/s11263-021-01453-z}.

\bibitem[Gu et~al.(2024)Gu, Dong, Wei, and Huang]{opd2}
Yuxian Gu, Li~Dong, Furu Wei, and Minlie Huang.
\newblock Mini{LLM}: Knowledge distillation of large language models.
\newblock In \emph{The Twelfth International Conference on Learning
  Representations}, 2024.
\newblock URL \url{https://openreview.net/forum?id=5h0qf7IBZZ}.

\bibitem[Han et~al.(2026)Han, Zhang, Wang, and Sun]{atesd}
Zihao Han, Tiangang Zhang, Huaibin Wang, and Yilun Sun.
\newblock Adaptive teacher exposure for self-distillation in llm reasoning,
  2026.
\newblock URL \url{https://arxiv.org/abs/2605.11458}.

\bibitem[Harne et~al.(2026)Harne, Karkar, Pandya, Awadallah, and
  Nambi]{harne2026privilegedbiasedpiconditionedteachers}
Sarthak Harne, Chinmay Karkar, Yash Pandya, Ahmed Awadallah, and Akshay Nambi.
\newblock Privileged, but biased: How pi-conditioned teachers break
  self-distillation, 2026.
\newblock URL \url{https://arxiv.org/abs/2608.04794}.

\bibitem[Hinton et~al.(2015)Hinton, Vinyals, and
  Dean]{hinton2015distillingknowledgeneuralnetwork}
Geoffrey Hinton, Oriol Vinyals, and Jeff Dean.
\newblock Distilling the knowledge in a neural network, 2015.
\newblock URL \url{https://arxiv.org/abs/1503.02531}.

\bibitem[Ho et~al.(2023)Ho, Schmid, and Yun]{ho-etal-2023-large}
Namgyu Ho, Laura Schmid, and Se-Young Yun.
\newblock Large language models are reasoning teachers.
\newblock In Anna Rogers, Jordan Boyd-Graber, and Naoaki Okazaki (eds.),
  \emph{Proceedings of the 61st Annual Meeting of the Association for
  Computational Linguistics (Volume 1: Long Papers)}, pp.\  14852--14882,
  Toronto, Canada, July 2023. Association for Computational Linguistics.
\newblock \doi{10.18653/v1/2023.acl-long.830}.
\newblock URL \url{https://aclanthology.org/2023.acl-long.830/}.

\bibitem[Hsieh et~al.(2023)Hsieh, Li, Yeh, Nakhost, Fujii, Ratner, Krishna,
  Lee, and Pfister]{hsieh-etal-2023-distilling}
Cheng-Yu Hsieh, Chun-Liang Li, Chih-kuan Yeh, Hootan Nakhost, Yasuhisa Fujii,
  Alex Ratner, Ranjay Krishna, Chen-Yu Lee, and Tomas Pfister.
\newblock Distilling step-by-step! outperforming larger language models with
  less training data and smaller model sizes.
\newblock In Anna Rogers, Jordan Boyd-Graber, and Naoaki Okazaki (eds.),
  \emph{Findings of the Association for Computational Linguistics: ACL 2023},
  pp.\  8003--8017, Toronto, Canada, July 2023. Association for Computational
  Linguistics.
\newblock \doi{10.18653/v1/2023.findings-acl.507}.
\newblock URL \url{https://aclanthology.org/2023.findings-acl.507/}.

\bibitem[H{\"u}botter et~al.(2026)H{\"u}botter, L{\"u}beck, Behric, Baumann,
  Bagatella, Marta, Hakimi, Shenfeld, Buening, Guestrin, and Krause]{sdpo}
Jonas H{\"u}botter, Frederike L{\"u}beck, Lejs~Deen Behric, Anton Baumann,
  Marco Bagatella, Daniel Marta, Ido Hakimi, Idan Shenfeld, Thomas~Kleine
  Buening, Carlos Guestrin, and Andreas Krause.
\newblock Reinforcement learning via self-distillation.
\newblock In \emph{Forty-third International Conference on Machine Learning},
  2026.
\newblock URL \url{https://openreview.net/forum?id=QkfkxyRizZ}.

\bibitem[Jain et~al.(2025)Jain, Han, Gu, Li, Yan, Zhang, Wang, Solar-Lezama,
  Sen, and Stoica]{jain2025livecodebench}
Naman Jain, King Han, Alex Gu, Wen-Ding Li, Fanjia Yan, Tianjun Zhang, Sida
  Wang, Armando Solar-Lezama, Koushik Sen, and Ion Stoica.
\newblock Livecodebench: Holistic and contamination free evaluation of large
  language models for code.
\newblock In \emph{The Thirteenth International Conference on Learning
  Representations}, 2025.
\newblock URL \url{https://openreview.net/forum?id=chfJJYC3iL}.

\bibitem[Jiang et~al.(2026)Jiang, Xie, Jiang, and
  Zhang]{jiang2026distillskillsweightsprompts}
Yubo Jiang, Fengying Xie, Zhiguo Jiang, and Haopeng Zhang.
\newblock Distill skills into weights, not prompts: Abstract skills as
  privileged signals for on-policy self-distillation, 2026.
\newblock URL \url{https://arxiv.org/abs/2608.09826}.

\bibitem[Kaur et~al.(2026)Kaur, Ri, He, Fowl, and
  Arora]{kaur2026rethinkingonpolicyselfdistillationthinking}
Simran Kaur, Narutatsu Ri, Yinghui He, Liam Fowl, and Sanjeev Arora.
\newblock Rethinking on-policy self-distillation for thinking models, 2026.
\newblock URL \url{https://arxiv.org/abs/2607.05184}.

\bibitem[Kim et~al.(2026)Kim, Luo, Kim, Lee, Kim, Jeon, Li, and
  Yang]{kim2026doesselfdistillationsometimesdegrade}
Jeonghye Kim, Xufang Luo, Minbeom Kim, Sangmook Lee, Dohyung Kim, Jiwon Jeon,
  Dongsheng Li, and Yuqing Yang.
\newblock Why does self-distillation (sometimes) degrade the reasoning
  capability of {LLM}s?
\newblock In \emph{Third Conference on Language Modeling}, 2026.
\newblock URL \url{https://openreview.net/forum?id=Az7guis46K}.

\bibitem[Kim \& Rush(2016)Kim and Rush]{kim-rush-2016-sequence}
Yoon Kim and Alexander~M. Rush.
\newblock Sequence-level knowledge distillation.
\newblock In Jian Su, Kevin Duh, and Xavier Carreras (eds.), \emph{Proceedings
  of the 2016 Conference on Empirical Methods in Natural Language Processing},
  pp.\  1317--1327, Austin, Texas, November 2016. Association for Computational
  Linguistics.
\newblock \doi{10.18653/v1/D16-1139}.
\newblock URL \url{https://aclanthology.org/D16-1139/}.

\bibitem[{Kimi Team} et~al.(2026){Kimi Team}, Bai, Bai, Bao, C., Cai, Cai, Cao,
  Cao, Chai, Charles, Che, Chen, Chen, Chen, Chen, Chen, Chen, Chen, Chen,
  et~al.]{mopd3}
{Kimi Team}, Tongtong Bai, Yifan Bai, Yiping Bao, M.~C., Jianfeng Cai, Xinyuan
  Cai, Peizhou Cao, Yuxuan Cao, Ziwei Chai, Y.~Charles, H.~S. Che, Guanduo
  Chen, Guangyu Chen, Guanzheng Chen, Huarong Chen, Jia Chen, Jianlong Chen,
  Jun Chen, Kexin Chen, et~al.
\newblock Kimi k3: Open frontier intelligence, 2026.
\newblock URL \url{https://arxiv.org/abs/2607.24653}.

\bibitem[{King's College London}(2026)]{kcl2026create}
{King's College London}.
\newblock {King's Computational Research, Engineering and Technology
  Environment (CREATE)}, 2026.
\newblock URL \url{https://doi.org/10.18742/rnvf-m076}.
\newblock Retrieved October 12, 2026.

\bibitem[Lanham et~al.(2023)Lanham, Chen, Radhakrishnan, Steiner, Denison,
  Hernandez, Li, Durmus, Hubinger, Kernion, Luko{\v{s}}i{\=u}t{\.e}, Nguyen,
  Cheng, Joseph, Schiefer, Rausch, Larson, McCandlish, Kundu, Kadavath, Yang,
  Henighan, Maxwell, Telleen-Lawton, Hume, Hatfield-Dodds, Kaplan, Brauner,
  Bowman, and Perez]{lanham2023measuring}
Tamera Lanham, Anna Chen, Ansh Radhakrishnan, Benoit Steiner, Carson Denison,
  Danny Hernandez, Dustin Li, Esin Durmus, Evan Hubinger, Jackson Kernion,
  Kamil{\.e} Luko{\v{s}}i{\=u}t{\.e}, Karina Nguyen, Newton Cheng, Nicholas
  Joseph, Nicholas Schiefer, Oliver Rausch, Robin Larson, Sam McCandlish,
  Sandipan Kundu, Saurav Kadavath, Shannon Yang, Thomas Henighan, Timothy
  Maxwell, Timothy Telleen-Lawton, Tristan Hume, Zac Hatfield-Dodds, Jared
  Kaplan, Jan Brauner, Samuel~R. Bowman, and Ethan Perez.
\newblock Measuring faithfulness in chain-of-thought reasoning, 2023.
\newblock URL \url{https://arxiv.org/abs/2307.13702}.

\bibitem[Li et~al.(2026)Li, Wang, Liang, Tian, Fu, and Vasconcelos]{uopsd}
Yijiang Li, Bingyang Wang, Yijun Liang, Yunjie Tian, Di~Fu, and Nuno
  Vasconcelos.
\newblock On-policy self-distillation without any supervision, 2026.
\newblock URL \url{https://arxiv.org/abs/2608.06296}.

\bibitem[Liu et~al.(2026)Liu, Zhang, Li, Lyu, and
  Shang]{liu2026herohindsightenhancedreflectionenvironment}
Haoran Liu, Yuwei Zhang, Xiyao Li, Bohan Lyu, and Jingbo Shang.
\newblock Hero: Hindsight-enhanced reflection from environment observations for
  agentic self-distillation, 2026.
\newblock URL \url{https://arxiv.org/abs/2606.11559}.

\bibitem[Lu \& {Thinking Machines Lab}(2025)Lu and {Thinking Machines
  Lab}]{opd4}
Kevin Lu and {Thinking Machines Lab}.
\newblock On-policy distillation.
\newblock \emph{Thinking Machines Lab: Connectionism}, 2025.
\newblock \doi{10.64434/tml.20251026}.
\newblock https://thinkingmachines.ai/blog/on-policy-distillation.

\bibitem[Olmo et~al.(2025)Olmo, Ettinger, Bertsch, Kuehl, Graham, Heineman,
  Groeneveld, Brahman, Timbers, Ivison, et~al.]{olmo2025olmo}
Team Olmo, Allyson Ettinger, Amanda Bertsch, Bailey Kuehl, David Graham, David
  Heineman, Dirk Groeneveld, Faeze Brahman, Finbarr Timbers, Hamish Ivison,
  et~al.
\newblock Olmo 3.
\newblock \emph{arXiv preprint arXiv:2512.13961}, 2025.

\bibitem[Sang et~al.(2026)Sang, Xu, Zhou, He, Wang, and
  Sun]{sang2026crispcompressedreasoningiterative}
Hejian Sang, Yuanda Xu, Zhengze Zhou, Ran He, Zhipeng Wang, and Jiachen Sun.
\newblock Crisp: Compressed reasoning via iterative self-policy distillation,
  2026.
\newblock URL \url{https://arxiv.org/abs/2603.05433}.

\bibitem[Shao et~al.(2024)Shao, Wang, Zhu, Xu, Song, Bi, Zhang, Zhang, Li, Wu,
  and Guo]{grpo}
Zhihong Shao, Peiyi Wang, Qihao Zhu, Runxin Xu, Junxiao Song, Xiao Bi, Haowei
  Zhang, Mingchuan Zhang, Y.~K. Li, Y.~Wu, and Daya Guo.
\newblock Deepseekmath: Pushing the limits of mathematical reasoning in open
  language models, 2024.
\newblock URL \url{https://arxiv.org/abs/2402.03300}.

\bibitem[Shen et~al.(2026{\natexlab{a}})Shen, Cheng, Zhao, Huang, Li, Zhao, and
  Yu]{shen2026anti}
Guobin Shen, Xiang Cheng, Chenxiao Zhao, Lei Huang, Jindong Li, Dongcheng Zhao,
  and Xing Yu.
\newblock Anti-self-distillation for reasoning rl via pointwise mutual
  information.
\newblock \emph{arXiv preprint arXiv:2605.11609}, 2026{\natexlab{a}}.

\bibitem[Shen et~al.(2026{\natexlab{b}})Shen, Tong, Yan, Shen, Chen, Ye, Hu,
  Miao, Wang, Zhao, Chen, and Ye]{purified}
Zhanming Shen, Jintao Tong, Shaotian Yan, Chen Shen, Hao Chen, Wentao Ye,
  Xiaomeng Hu, Rui Miao, Haobo Wang, Junbo Zhao, Gang Chen, and Jieping Ye.
\newblock Purified opsd: On-policy self-distillation without losing how to
  think, 2026{\natexlab{b}}.
\newblock URL \url{https://arxiv.org/abs/2607.02234}.

\bibitem[Wang et~al.(2026{\natexlab{a}})Wang, Wang, Xiao, Zhou, Pan, Wang, Xu,
  Wen, Ruan, Chen, and
  Qi]{wang2026skillsdskillconditionedselfdistillationmultiturn}
Hao Wang, Guozhi Wang, Han Xiao, Yufeng Zhou, Yue Pan, Jichao Wang, Ke~Xu,
  Yafei Wen, Xiaohu Ruan, Xiaoxin Chen, and Honggang Qi.
\newblock Skill-sd: Skill-conditioned self-distillation for multi-turn llm
  agents, 2026{\natexlab{a}}.
\newblock URL \url{https://arxiv.org/abs/2604.10674}.

\bibitem[Wang et~al.(2026{\natexlab{b}})Wang, Chen, Jin, Wang, and
  Yang]{wang2026openclawrl}
Yinjie Wang, Xuyang Chen, Xiaolong Jin, Mengdi Wang, and Ling Yang.
\newblock Openclaw-rl: Train any agent simply by talking.
\newblock \emph{arXiv preprint arXiv:2603.10165}, 2026{\natexlab{b}}.

\bibitem[Xu et~al.(2024)Xu, Li, Tao, Shen, Cheng, Li, Xu, Tao, and
  Zhou]{xu2024surveyknowledgedistillationlarge}
Xiaohan Xu, Ming Li, Chongyang Tao, Tao Shen, Reynold Cheng, Jinyang Li, Can
  Xu, Dacheng Tao, and Tianyi Zhou.
\newblock A survey on knowledge distillation of large language models, 2024.
\newblock URL \url{https://arxiv.org/abs/2402.13116}.

\bibitem[Yang et~al.(2025)Yang, Li, Yang, Zhang, Hui, Zheng, Yu, Gao, Huang,
  Lv, Zheng, Liu, Zhou, Huang, Hu, Ge, Wei, Lin, Tang, Yang, Tu, Zhang, Yang,
  Yang, Zhou, Zhou, Lin, Dang, Bao, Yang, Yu, Deng, Li, Xue, Li, Zhang, Wang,
  Zhu, Men, Gao, Liu, Luo, Li, Tang, Yin, Ren, Wang, Zhang, Ren, Fan, Su,
  Zhang, Zhang, Wan, Liu, Wang, Cui, Zhang, Zhou, and Qiu]{opd3}
An~Yang, Anfeng Li, Baosong Yang, Beichen Zhang, Binyuan Hui, Bo~Zheng, Bowen
  Yu, Chang Gao, Chengen Huang, Chenxu Lv, Chujie Zheng, Dayiheng Liu, Fan
  Zhou, Fei Huang, Feng Hu, Hao Ge, Haoran Wei, Huan Lin, Jialong Tang, Jian
  Yang, Jianhong Tu, Jianwei Zhang, Jianxin Yang, Jiaxi Yang, Jing Zhou,
  Jingren Zhou, Junyang Lin, Kai Dang, Keqin Bao, Kexin Yang, Le~Yu, Lianghao
  Deng, Mei Li, Mingfeng Xue, Mingze Li, Pei Zhang, Peng Wang, Qin Zhu, Rui
  Men, Ruize Gao, Shixuan Liu, Shuang Luo, Tianhao Li, Tianyi Tang, Wenbiao
  Yin, Xingzhang Ren, Xinyu Wang, Xinyu Zhang, Xuancheng Ren, Yang Fan, Yang
  Su, Yichang Zhang, Yinger Zhang, Yu~Wan, Yuqiong Liu, Zekun Wang, Zeyu Cui,
  Zhenru Zhang, Zhipeng Zhou, and Zihan Qiu.
\newblock Qwen3 technical report, 2025.
\newblock URL \url{https://arxiv.org/abs/2505.09388}.

\bibitem[Yang et~al.(2026)Yang, Wu, Lu, Shen, Zhang, Feng, Zhang, Luo, Lian,
  Wen, and Tao]{yang2026opidonpolicyskilldistillation}
Shuo Yang, Jinyang Wu, Zhengxi Lu, Yuhao Shen, Fan Zhang, Lang Feng, Shuai
  Zhang, Haoran Luo, Zheng Lian, Zhengqi Wen, and Jianhua Tao.
\newblock Opid: On-policy skill distillation for agentic reinforcement
  learning, 2026.
\newblock URL \url{https://arxiv.org/abs/2606.26790}.

\bibitem[Ye et~al.(2026)Ye, Dong, Wu, Huang, and
  Wei]{ye2026onpolicycontextdistillationlanguage}
Tianzhu Ye, Li~Dong, Xun Wu, Shaohan Huang, and Furu Wei.
\newblock On-policy context distillation for language models, 2026.
\newblock URL \url{https://arxiv.org/abs/2602.12275}.

\bibitem[Yu et~al.(2025)Yu, Zhang, Zhu, Yuan, Zuo, YuYue, Dai, Fan, Liu, Liu,
  Liu, Liu, Lin, Lin, Ma, Sheng, Tong, Zhang, Zhang, Zhang, Zhang, Zhu, Zhu,
  Chen, Chen, Wang, Yu, Song, Wei, Zhou, Liu, Ma, Zhang, Yan, Wu, and
  Wang]{yu2025dapo}
Qiying Yu, Zheng Zhang, Ruofei Zhu, Yufeng Yuan, Xiaochen Zuo, YuYue, Weinan
  Dai, Tiantian Fan, Gaohong Liu, Juncai Liu, LingJun Liu, Xin Liu, Haibin Lin,
  Zhiqi Lin, Bole Ma, Guangming Sheng, Yuxuan Tong, Chi Zhang, Mofan Zhang,
  Ru~Zhang, Wang Zhang, Hang Zhu, Jinhua Zhu, Jiaze Chen, Jiangjie Chen,
  Chengyi Wang, Hongli Yu, Yuxuan Song, Xiangpeng Wei, Hao Zhou, Jingjing Liu,
  Wei-Ying Ma, Ya-Qin Zhang, Lin Yan, Yonghui Wu, and Mingxuan Wang.
\newblock {DAPO}: An open-source {LLM} reinforcement learning system at scale.
\newblock In \emph{The Thirty-ninth Annual Conference on Neural Information
  Processing Systems}, 2025.
\newblock URL \url{https://openreview.net/forum?id=2a36EMSSTp}.

\bibitem[Yu et~al.(2018)Yu, Zhang, Yang, Yasunaga, Wang, Li, Ma, Li, Yao,
  Roman, Zhang, and Radev]{yu-etal-2018-spider}
Tao Yu, Rui Zhang, Kai Yang, Michihiro Yasunaga, Dongxu Wang, Zifan Li, James
  Ma, Irene Li, Qingning Yao, Shanelle Roman, Zilin Zhang, and Dragomir Radev.
\newblock {S}pider: A large-scale human-labeled dataset for complex and
  cross-domain semantic parsing and text-to-{SQL} task.
\newblock In \emph{Proceedings of the 2018 Conference on Empirical Methods in
  Natural Language Processing}. Association for Computational Linguistics,
  2018.
\newblock URL \url{https://aclanthology.org/D18-1425/}.

\bibitem[Zhang et~al.(2026)Zhang, Li, Yu, Lu, Jin, Dong, Liu, Hong, Li, Shi,
  Yin, and Shang]{zhang2026learningraresuccessrich}
Yuwei Zhang, Sha Li, Changlong Yu, Qin Lu, Shuowei Jin, Chengyu Dong, Haoran
  Liu, Ilgee Hong, Xintong Li, Zhenyu Shi, Bing Yin, and Jingbo Shang.
\newblock Learning with rare success but rich feedback via reflection-enhanced
  self-distillation, 2026.
\newblock URL \url{https://arxiv.org/abs/2605.12741}.

\bibitem[Zhao et~al.(2026{\natexlab{a}})Zhao, Xie, Liu, Huang, Pang, Chen, and
  Grover]{opsd}
Siyan Zhao, Zhihui Xie, Mengchen Liu, Jing Huang, Guan Pang, Feiyu Chen, and
  Aditya Grover.
\newblock Self-distilled reasoner: On-policy self-distillation for large
  language models.
\newblock In \emph{Forty-third International Conference on Machine Learning},
  2026{\natexlab{a}}.
\newblock URL \url{https://openreview.net/forum?id=Jpxfof0EaS}.

\bibitem[Zhao et~al.(2026{\natexlab{b}})Zhao, Zhang, Xu, Wang, Caiyue, Zhao,
  and Li]{psopsd}
Xuyang Zhao, Liting Zhang, Zichen Xu, Zhihu Wang, Xu~Caiyue, Shiwan Zhao, and
  Qicheng Li.
\newblock Is more privileged information better? from solution traces to
  problem-solving structure in self-distilled reasoning, 2026{\natexlab{b}}.
\newblock URL \url{https://arxiv.org/abs/2608.01589}.

\bibitem[Zhao et~al.(2026{\natexlab{c}})Zhao, Ma, Yang, Feng, Shi, He, Xin,
  Ren, and Wu]{ropsd}
Ziqi Zhao, Xinyu Ma, Liu Yang, Yujie Feng, Daiting Shi, Jingzhou He, Xin Xin,
  Zhaochun Ren, and Xiao-Ming Wu.
\newblock Rosd: Reflective on-policy self-distillation for language model
  reasoning across domains, 2026{\natexlab{c}}.
\newblock URL \url{https://arxiv.org/abs/2605.28014}.

\bibitem[Zheng et~al.(2025)Zheng, Liu, Li, Chen, Yu, Gao, Dang, Liu, Men, Yang,
  Zhou, and Lin]{zheng2025group}
Chujie Zheng, Shixuan Liu, Mingze Li, Xiong-Hui Chen, Bowen Yu, Chang Gao, Kai
  Dang, Yuqiong Liu, Rui Men, An~Yang, Jingren Zhou, and Junyang Lin.
\newblock Group sequence policy optimization, 2025.
\newblock URL \url{https://arxiv.org/abs/2507.18071}.

\end{thebibliography}
